\newtheorem{thm}{Theorem}
\newtheorem{prop}{Proposition}
\newtheorem{defn}{Definition}
\newtheorem{assump}{Assumption}
\newcommand{\beginsupplement}{%
        \setcounter{section}{0}
        \renewcommand{\thesection}{S\arabic{section}}%
        \setcounter{algorithm}{0}
     \renewcommand{\thealgorithm}{S\arabic{algorithm}}
        \setcounter{table}{0}
        \renewcommand{\thetable}{S\arabic{table}}%
        \setcounter{figure}{0}
        \renewcommand{\thefigure}{S\arabic{figure}}%
        \setcounter{thm}{0}
     }
\DeclareMathOperator{\E}{\mathbb{E}}
\newcommand{\ie}{\textit{i}.\textit{e}., }
\newcommand*\samethanks[1][\value{footnote}]{\footnotemark[#1]}
\title{Domain Adaptation Using Adversarial Learning for Autonomous Navigation}
\author{
  Jaeyoon Yoo\thanks{These authors contributed equally to this work.} \\
  Electrical and Computer Engineering\\
  Seoul National University\\
  Seoul 08826, Korea  \\
  \texttt{yjy765@snu.ac.kr} \\
  \And
  Yongjun Hong\samethanks \\
  Electrical and Computer Engineering \\
  Seoul National University\\
  Seoul 08826, Korea \\
  \texttt{yjhong@snu.ac.kr} \\
  \AND
  Yungkyun Noh \\
  Mechanical and Aerospace Engineering \\
  Seoul National University\\
  Seoul 08826, Korea \\
  \texttt{yungkyun.noh@gmail.com} \\
  \And
  Sungroh Yoon\thanks{To whom correspondence should be addressed.} \\
  Electrical and Computer Engineering \\
  Seoul National University \\
  Seoul 08826, Korea \\
  \texttt{sryoon@snu.ac.kr} \\
}
\begin{document}

\maketitle

\begin{abstract}
  Autonomous navigation has become an increasingly popular machine learning application. Recent advances in deep learning have also resulted in great improvements to autonomous navigation. However, prior outdoor autonomous navigation depends on various expensive sensors or large amounts of real labeled data which is difficult to acquire and sometimes erroneous. The objective of this study is to train an autonomous navigation model that uses a simulator (instead of real labeled data) and an inexpensive monocular camera. In order to exploit the simulator satisfactorily, our proposed method is based on domain adaptation with adversarial learning. Specifically, we propose our model with 1) a dilated residual block in the generator, 2) cycle loss, and 3) style loss to improve the adversarial learning performance for satisfactory domain adaptation. In addition, we perform a theoretical analysis that supports the justification of our proposed method. We present empirical results of navigation in outdoor courses with various intersections using a commercial radio controlled car. We observe that our proposed method allows us to learn a favorable navigation model by generating images with realistic textures. To the best of our knowledge, this is the first work to apply domain adaptation with adversarial learning to autonomous navigation in real outdoor environments. Our proposed method can also be applied to precise image generation or other robotic tasks.
\end{abstract}

\section{Introduction}

Autonomous navigation for vehicles has attracted great attention recently in the machine learning field. Recent advances in deep learning~\cite{sunwoo1} have garnered significant achievements~\cite{carNvidia}. However, the majority of prior studies required large labeled datasets, which often thwarted further progress because of the expense of human labeling.

In this paper, we propose a data-driven method for autonomous navigation that employs domain adaptation via adversarial learning networks \cite{DANN,ARDA}, and takes only a monocular raw image as an input. Our approach has two major advantages over the previous works. 

Firstly, our method alleviates the labeling issue through domain adaptation by adversarially generating realistic images from simulated images. Domain adaptation refers to the adaptation of one domain to another such that the data from the former domain can be used to train a model for the tasks in the latter domain \cite{transfer}. We apply domain adaption to a simulator in order to train a navigational model that works reasonably well in a real environment without any real labeled data. Simulated environments do not suffer from labeling issues \cite{Berkeley}. 
These days simulated images are often highly realistic, which attracts us to use a simulator \cite{jaehong1} to generate training images.

Secondly, our method requires only a monocular camera for sensing. Several existing approaches to autonomous navigation depend on various sensors, including global positioning system (GPS) sensors and depth sensors \cite{carsensor1,carsensor2,carsensor3,gps1}. Such approaches are effective but often raise issues. The additional cost and complexity are the main issues that are encountered with the use of multiple sensors. 
They may increase the actuator burden and have their own limitations.
For example, a GPS sensor does not usually operate well in an indoor environment or in a forest. 
As compared with other sensors, a monocular camera is light and cost-effective. 

Generating realistic images while preserving the content of simulated images is crucial for realizing a successful navigational model on real world data. We demonstrate that successful domain adaptation can be achieved using our proposed method, which incorporates a dilated residual block in the generator, cycle loss, style loss, and other techniques such as patch discriminator and soft label. We have observed that our method greatly helps to make simulated images realistic by changing not only their color but also their texture (of elements such as grass and trees) while maintaining the road and buildings, which aids in the learning of a favorable navigational model.

We evaluate the performance of our method by performing a navigational task using an RC car in the real world with various intersections. We obtained results by training a deep learning model using a myriad of simulator images with auto-generated labels and a few real images without any label. An additional experiment with a local trail environment, which can be applied to Unmanned Aerial Vehicle (UAV) road following, is referred to in supplementary material S5.

We summarize our contributions as follows:
\begin{itemize}[leftmargin=*]
\item We successfully performed autonomous navigation of outdoor roads with various intersections in an unsupervised manner. 
\item We found an optimal pipeline for performing domain adaptation successfully through the use of cycle loss, style loss, dilated convolution, soft label and other techniques.
\item We showed how our method can outperform existing methods theoretically even though it does not require real labels.
\end{itemize}

\section{Related Work} \label{related}
\subsection{Learning for Autonomous Navigation}
The majority of the learning based autonomous navigation models are trained using supervised learning. \citet{carNvidia} realized autonomous highway driving and some studies improved them by changing learning method \cite{Dagger,PLATO,carDagger, Rccar1, carend} or learning driving state associated with driving actions \cite{deepdriving,guisti,NVIDIA}. However, these methods require real labeled data which is usually expensive. 

Some studies exploited a simulator to train an autonomous navigation model for some advantages: 1) Ability to measure any physical property and change model configuration at a very low cost. 2) No safety issues. However, it is difficult to use the model learned from the simulator directly in reality. To address this issue, \citet{Depthnet} and \citet{RCcar_transfer} exploited input data, which is less variant across the two environments instead of raw images. However, their methods depend on additional sensors or complex subsystems. \citet{CAD2RL} suggested the use of randomization with widely varying features. Although they demonstrated the performance of  their navigation model in an indoor environment, it may be challenging to apply it to an outdoor environment owing to the high complexity involved \cite{RCcar_transfer}. \citet{drivingGAN} and \citet{fuck} are works that are the most similar to ours and use generative adversarial networks (GANs) \cite{GAN,lee2017seqgan,yoo2017energy,gansurvey,hwang2017disease} to adapt the simulator to reality; however, they did not demonstrate actual driving in an outdoor environment. Moreover, details to improve domain adaptation performance in their works are different from ours.

\subsection{Domain Adaptation}
Domain adaptation is a method that involves the adaptation of data from one domain (\ie a source domain) into another domain (\ie a target domain) while the classification task performance is preserved in a target domain \cite{transfer}. By doing so, we can train a classifier to work satisfactorily for the target domain data with plentiful labeled source domain data and unlabeled target domain data. 


The important part of domain adaptation is the reduction of the differences in the two domains' distributions, which is called a domain discrepancy \cite{domaindisc}. One method to achieve this is to construct a common representation space wherein the two distributions are projected indistinguishably from each other \cite{DANN,domaindisc,ARDA,transfer_survey2}, and then train a task model with the projected source domain data. Intuitively, as the two projected distributions approach each other, the domain discrepancy becomes smaller and the task model performs satisfactorily in the target domain \cite{DANN}. 
In order to find a common representation space, several methods such as using re-weighting \cite{transfer1,transfer2} and finding a feature space/subspace transformation \cite{transfer3,transfer4,transfer5} have been proposed. Recently, some trials have been performed, in which a GAN framework was used to make the feature representations indistinguishable between the two domains while maintaining the ability of the task classifier \cite{DANN,unsupervised,ARDA,CYCADA}. 

Domain adaptation has been applied to various tasks. \citet{mav} learned transferable policies for UAV navigation using the maximum mean discrepancy \cite{MMD, mmd1,mmdGAN} of a predefined kernel.
There are also several studies that employed domain adaptation to exploit synthesized data for eye-gaze estimation, pose estimation and other robotics tasks \cite{Berkeley,Apple,DAex1,DAex2,DAex3}.

\section{Method}
\begin{figure}[t]
\centering
\includegraphics[width=\linewidth, height=4cm]{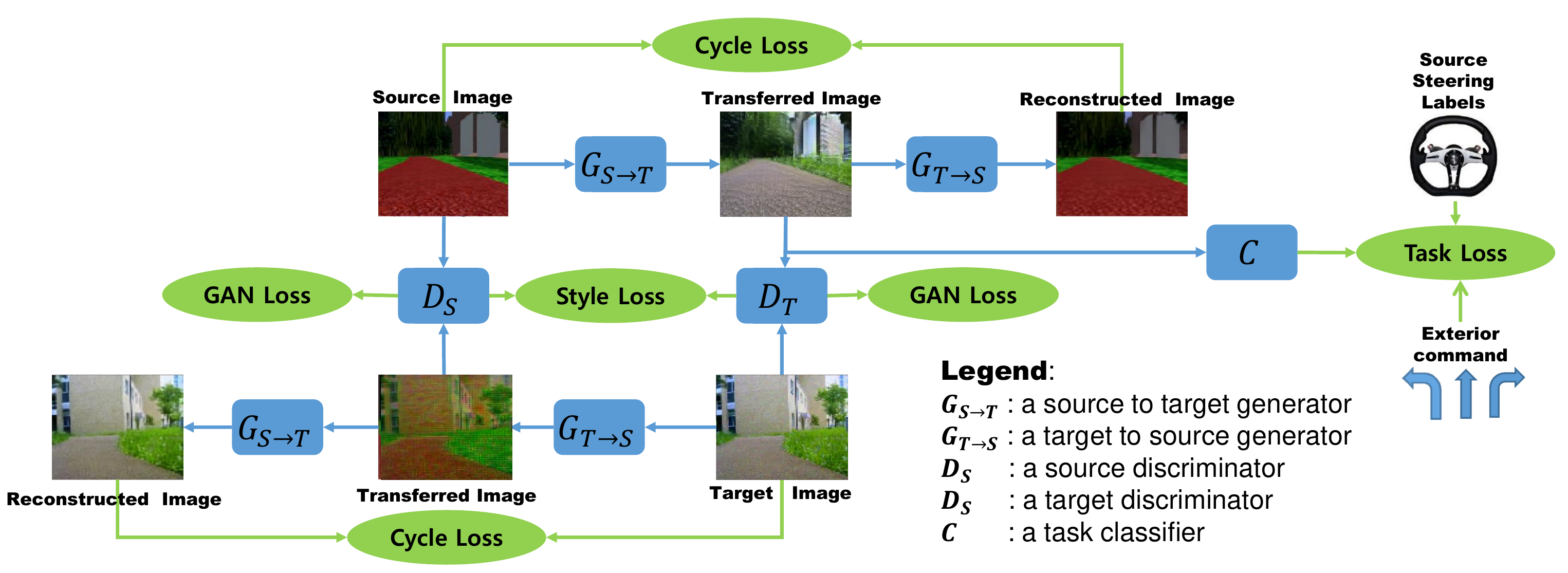}
\caption{Overview of our proposed method.}
\label{fig:overview}
\end{figure}

Our objective is to train a navigation system performing a navigation task only using simulator images, real images, and correct steering labels for corresponding simulator images. To address the difference between simulator images and real images, and the absence of labels for real images, we adopt domain adaptation with an adversarial learning framework. By transferring a simulator image (\ie source domain) into a transferred image (\ie transferred domain) to make it look like a real image (\ie target domain) while maintaining its characteristics with adversarial learning, the domain discrepancy is reduced and an accurate navigational model can be achieved in the target domain. In this section, we explain our proposed method, which can be used to significantly reduce domain discrepancy and develop an excellent navigational model.

\subsection{Model Components}\label{model_details}




An overview of our model is illustrated in Fig.~\ref{fig:overview}. Our model contains 
two generators $G_{T\rightarrow S}$ and $G_{S\rightarrow T}$, two discriminators $D_{T}$ and $D_{S}$, and a task classifier $C$ with the trainable parameters $\theta_{g^T}, \theta_{g^S}, \theta_{d^T}, \theta_{d^S}, \theta_{c}$ respectively. The subscript $S$ and $T$ indicate the domains of the input image (source/target). The generator converts the input image from one domain to fit to the other domain. The discriminator takes the transferred images and real images and distinguishes between them. The classifier, which has branches at the last layer \cite{Rccar1} to perform a conditional navigation task, takes the transferred image and the ground-truth label of the source image, and then learns to classify.

We design the generator with residual blocks \cite{he2016deep}. We replace some down-sampling layers with the dilated convolution \cite{drn} not to lose spatial information without reducing receptive fields, which is crucial for scenery understanding. We call this architecture as dilated residual network (DRN) \cite{drn} in the generator. We adopt the dilated convolution in the classifier for the same purpose.

In addition, we attempt to apply UNET \cite{unet} to our generator, which is a dominant model for pixel-to-pixel image generation tasks such as image segmentation \cite{unet,cciccek20163d} and style transfer \cite{pix2pix}. By including skip connections, UNET can utilize the low level features of the input image. We compare the quality of the transferred images in Sec.\ref{compare}.

We adopt the patchGAN in \cite{pix2pix} for the discriminator to generate high-resolution image. The patchGAN has multiple discriminators for local patches of the image to capture local high-frequency parts of the image. We empirically observed that the use of patchGAN increases a resolution of the transferred image compared with a single discriminator as the size of the transferred image increases. Details regarding its implementation are described in supplementary material S1.


\subsection{Training Loss}\label{loss}

Our objective function contains three loss terms: adversarial loss, style loss, and task loss.



1) \textbf{Adversarial loss.} We adopt the loss terms of the Fisher GAN \cite{FISHERGAN} inspired from the Fisher linear discriminant analysis \cite{flda}. The Fisher GAN belongs to the integral probability metrics (IPMs) framework \cite{ipm}, which is known to be a strongly and consistently convergent metric family \cite{ipmfdiv,gansurvey}. 
Moreover, the Fisher GAN is more computationally beneficial than other IPMs such as those in \cite{wasserstein, wgp} as it does not require the calculation of second derivatives for estimating the distance \cite{FISHERGAN}
while maintaining a stable training property.

In addition, we adopt L1 cyclic consistency loss between the input and reconstructed images based on \cite{cyclegan,disco,CYCADA}. We anticipate that the use of this loss can aid in preserving the content of an image, especially the road, which is important for training an accurate navigational model.

The adversarial loss for the target input image is defined as follows:
\begin{align}
L_A^T = \Phi(\theta_{d}^T, \theta_{g}^T) + \lambda(1-\Omega(\theta_d^T, \theta_g^T))-\frac{\rho}{2}(\Omega(\theta_d^T, \theta_g^T)-1)^2 +\E_{x\sim P_\mathrm{T}} \Vert G_{S\rightarrow T}(G_{T\rightarrow S}(x)) - x \Vert_1
\label{fisherfirst} 
\end{align}
where $\lambda$ is the Lagrange multiplier, $\rho>0$ is the quadratic penalty weight coefficient \cite{FISHERGAN}, and $P_\mathrm{S}$ and $P_\mathrm{T}$ are the source and target image distributions, respectively. $\Phi(\theta_d, \theta_g)$ and $\Omega(\theta_d, \theta_g)$ are equations derived from augmented Lagrangian \cite{augmented,FISHERGAN} and defined as follows:
\begin{align}
\Phi(\theta_d^T, \theta_g^T) &= \E_{x\sim P_\mathrm{T}}D_{T}(x)-\E_{x \sim P_\mathrm{S}}D_{T}(G_{S\rightarrow T}(x)) \label{fishersec} \\
\Omega(\theta_d^T, \theta_g^T) &= \frac{1}{2}[\E_{x\sim P_\mathrm{T}}D_{T}^2(x)+\E_{x\sim P_\mathrm{S}}D_{T}^2(G_{S\rightarrow T}(x))] \label{fisherlast}
\end{align}
The adversarial loss for the source input image $L_A^S$ can be derived in a similar manner.

2) \textbf{Style loss.} As the source and target images differ in their style and texture, we adopt a style loss in order to allow the transferred image to have a similar style to the target images. We are motivated by style transfer which is, when given two images, to generate an image containing the content of one image and the style of the other image. To obtain the representation of a style, we use the gram matrix which indirectly represents feature distribution as in \cite{style2,style3,style4}. Instead of using a pretrained network, we utilize the middle layers' features of the discriminator to obtain the gram matrix, which works surprisingly well, as shown in Sec.\ref{experiment}.



To match the styles of the real and transferred images, we attempt to reduce the squared Frobenius norm of the gram matrix of the real image, $A^l$ and that of the fake image, $B^l$ in the $l_{th}$ layers of the discriminator. The style loss is defined as follows:
\begin{align}
L_S = \sum_{l=1}^L \frac{1}{(n^l)^2 (c^l)^2} \sum_{i,j} (A_{ij}^l-B_{ij}^l)^2 \label{style loss}
\end{align}
where $n^l$ and $c^l$ represent the product of the batch size and the area of the feature map and the number of the feature map, respectively, in the layer $l$. $L$ is the number of layers in the discriminator.


3) \textbf{Task loss.} We divide the steering command into five intervals to assign class labels and train the classifier using typical cross entropy loss. It should be noted that instead of strict labels for each class, we use soft labels that assign a small portion of the label to adjacent classes \cite{salimans2016improved}. It provides the classifier with the relation between the adjacent classes and prevents the classifier from producing extremely sharp results. \citet{NVIDIA} showed that an overly sharp output may hinder navigation. As compared with the use of the negative entropy and swap penalty terms proposed in \cite{NVIDIA}, our method is simpler and produces the same desired result.

\subsection{Theoretical Analysis} \label{md}

We address the lack of labels in the target domain by making the source images resemble the target images, and giving them the labels of the source images. Intuitively, if a classifier works well for the transferred images and the transferred images are similar to the target images, then the classifier may fit to the target images. In this section, we argue that our intuition can be explained mathematically by presenting the upper bound of the trained model's performance under a mild assumption. 

We refer to the image and label spaces as $\mathcal{X}$ and $\mathcal{Y}$, respectively. For an image $x \in \mathcal{X}$ and a label $y\in \mathcal{Y}$, $P_\mathrm{T}(x,y)$ and $P_\mathrm{F}(x,y)$ defined on $\mathcal{X} \times \mathcal{Y}$ denote the joint probability on the target and the transferred domains, respectively. $P_\mathrm{T}(x)$ and $P_\mathrm{T}(y|x)$ are the marginal distribution and conditional probability for the joint probability, $P_\mathrm{T}(x,y)$, respectively. The same is true for $P_\mathrm{F}(x)$ and $P_\mathrm{F}(y|x)$. Below is the list of a definition and assumption used in our analysis.

\begin{defn}\label{def1}
An error rate $e(x,y)$ denotes the inaccuracy of the trained classifier for an image $x$ and its label $y$. The range of the error rate is [0,1]. Then an average error in the target domain is the average error rate on the target domain, \ie
\begin{equation} \label{eq:errorgdd}
E_\mathrm{T} = \int_{\mathcal{X} \times \mathcal{Y}} e(x,y)P_\mathrm{T}(x,y)dxdy.
\end{equation}
An average error in the transferred domain, $E_\mathrm{F}$, is defined in the same manner. It should be noted that we use integral $\int\cdot$ for the discrete space $\mathcal{Y}$ for convenience.
\end{defn}

We can interpret $E_\mathrm{T}$ as the measure of the trained model's inaccuracy on the target domain. The same is true for $E_\mathrm{F}$.

\begin{assump}\label{assump1}
The class information has been transferred correctly, \ie $P_\mathrm{F}(y|x) \approx P_\mathrm{T}(y|x)$ in $supp(P_\mathrm{T})\bigcup supp(P_\mathrm{F}) \subset \mathcal{X} \times \mathcal{Y}$, where $supp(\cdot)$ indicates the support of a probability distribution.
\end{assump}



Assumption~\ref{assump1} can be supported with practical implementations such as the cyclic loss in Sec.\ref{loss}. 
We can then obtain the upper bound of the error of the classifier on the target domain as follows:

\begin{thm}\label{thm1}
$E_\mathrm{T} \leq 4 \sqrt{X_2^2(P_\mathrm{T}(x),P_\mathrm{F}(x))} + E_\mathrm{F}$ where $X_2^2$ is the Chi-squared distance.
\end{thm}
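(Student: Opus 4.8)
The plan is to control the gap $E_\mathrm{T}-E_\mathrm{F}$ directly and then bound it by the chi-squared distance of the image marginals. First I would write both errors against a common integrand using Definition~\ref{def1}:
$$E_\mathrm{T} - E_\mathrm{F} = \int_{\mathcal{X}\times\mathcal{Y}} e(x,y)\,\big(P_\mathrm{T}(x,y) - P_\mathrm{F}(x,y)\big)\,dx\,dy.$$
Since $e(x,y)\in[0,1]$, the claim $E_\mathrm{T}\le 4\sqrt{X_2^2(P_\mathrm{T}(x),P_\mathrm{F}(x))}+E_\mathrm{F}$ is equivalent to bounding this signed integral, i.e. to controlling the discrepancy of the \emph{joint} laws by the chi-squared distance of the \emph{marginals} alone. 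The role of Assumption~\ref{assump1} is precisely to make this passage from joint to marginal possible.

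The key reduction factors each joint law as marginal times conditional, $P_\bullet(x,y)=P_\bullet(x)P_\bullet(y|x)$, and replaces $P_\mathrm{F}(y|x)$ by $P_\mathrm{T}(y|x)$ using $P_\mathrm{F}(y|x)\approx P_\mathrm{T}(y|x)$ on the combined support. Integrating out the label then yields
$$E_\mathrm{T} - E_\mathrm{F} \;\approx\; \int_{\mathcal{X}} g(x)\,\big(P_\mathrm{T}(x) - P_\mathrm{F}(x)\big)\,dx, \qquad g(x):=\int_{\mathcal{Y}} e(x,y)\,P_\mathrm{T}(y|x)\,dy.$$
Because $e\in[0,1]$ and $P_\mathrm{T}(\cdot|x)$ is a probability density, the weight obeys $0\le g(x)\le 1$. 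Bounding $|g|\le 1$ and applying the triangle inequality gives $E_\mathrm{T}-E_\mathrm{F}\le \int_{\mathcal{X}}|P_\mathrm{T}(x)-P_\mathrm{F}(x)|\,dx$, so the gap is at most the $L^1$ distance (twice the total variation) of the two image marginals.

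The final step converts this $L^1$ distance into the chi-squared distance by Cauchy--Schwarz: writing $|P_\mathrm{T}-P_\mathrm{F}| = \tfrac{|P_\mathrm{T}-P_\mathrm{F}|}{\sqrt{w}}\,\sqrt{w}$ for the weight $w$ that appears in the denominator of $X_2^2$, and using $\int w\,dx<\infty$, produces $\int_{\mathcal{X}}|P_\mathrm{T}-P_\mathrm{F}|\,dx \le C\,\sqrt{X_2^2(P_\mathrm{T}(x),P_\mathrm{F}(x))}$ with a multiplicative constant $C$.

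I expect the main obstacle to be twofold. First, discharging the approximate equality in Assumption~\ref{assump1} rigorously: one must show that the cross term dropped in the reduction, namely $\int e(x,y)\,P_\mathrm{F}(x)\,(P_\mathrm{T}(y|x)-P_\mathrm{F}(y|x))\,dx\,dy$, is genuinely negligible and can be absorbed, rather than merely assumed away. Second, tracking the numerical constant through the Cauchy--Schwarz step so that it matches the stated factor $4$ under the paper's normalization of $X_2^2$; a naive application gives a smaller constant, so the remaining slack (from the total-variation factor of $2$ and the choice of denominator $w$) must be accounted for explicitly to reach $4$.
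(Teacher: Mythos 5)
Your proposal follows essentially the same route as the paper's proof: write $E_\mathrm{T}-E_\mathrm{F}$ against the difference of the joint densities, use Assumption~1 together with $e(x,y)\in[0,1]$ to collapse the conditional and reduce the gap to the $L^1$ (total-variation) distance of the image marginals, and then bound that by $\sqrt{X_2^2}$ --- the paper does this last step via a cited inequality $TV^2\le\frac{1}{4}X_2^P$ combined with the midpoint identity relating $X_2^2$ to the Pearson divergence, which amounts to exactly the weighted Cauchy--Schwarz computation you describe. Your concern about tracking the constant to hit $4$ exactly is moot: the theorem is an upper bound, so obtaining a smaller multiplicative constant only strengthens the conclusion.
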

\begin{proof}
See supplementary material S2.
\end{proof}

According to Theorem 1.1 in \citet{FISHERGAN}, the Fisher IPM is equivalent to the Chi-squared distance, $X_2^2(P_\mathrm{T}(x),P_\mathrm{F}(x))$. It should, thus, be small if we perform the Fisher GAN training successfully. $E_\mathrm{F}$ should also be small if we train the classifier correctly. Along with our training result, this analysis explains why our empirical results in Sec.\ref{experiment} show excellent performance.

\section{Experiments} \label{experiment}

%

We evaluate our model using two criteria. First, to check how well our model performs domain adaptation, we compare the transferred images qualitatively. Second, in order to observe the navigational performance in the real world, we test how well the vehicle finishes courses with various intersections. We adopt the problem setup used in ~\cite{Rccar1}. One of the three navigational commands (turn left/go straight/turn right) is given to the vehicle at each time. The navigational model should still determine how to pass a winding or curved road under the `go straight' command and when to/how to turn under the `turn left/right' command.

\subsection{Setup}
We used the ROBOTIS Turtlebot3, a commercial RC car robot for our experiment. We mounted three webcams facing the left, forward and right directions. 
We gathered approximately 5000 pairs of images and steering labels in a simulator. To alleviate the covariate shift~\cite{Dagger} and improve the generalization, we added noise while collecting the data and performed data augmentation by perturbing contrast and saturation of the images randomly as in ~\cite{Rccar1}. Aside from the simulator images for the source images, we gathered approximately 5,000 pairs of images for the target images. A few examples of these images are shown in Fig.\ref{env}.

 \begin{figure}[t]
 \centering
   \begin{minipage}[t]{.24\textwidth}
   \includegraphics[width=\linewidth, height=3.5cm]{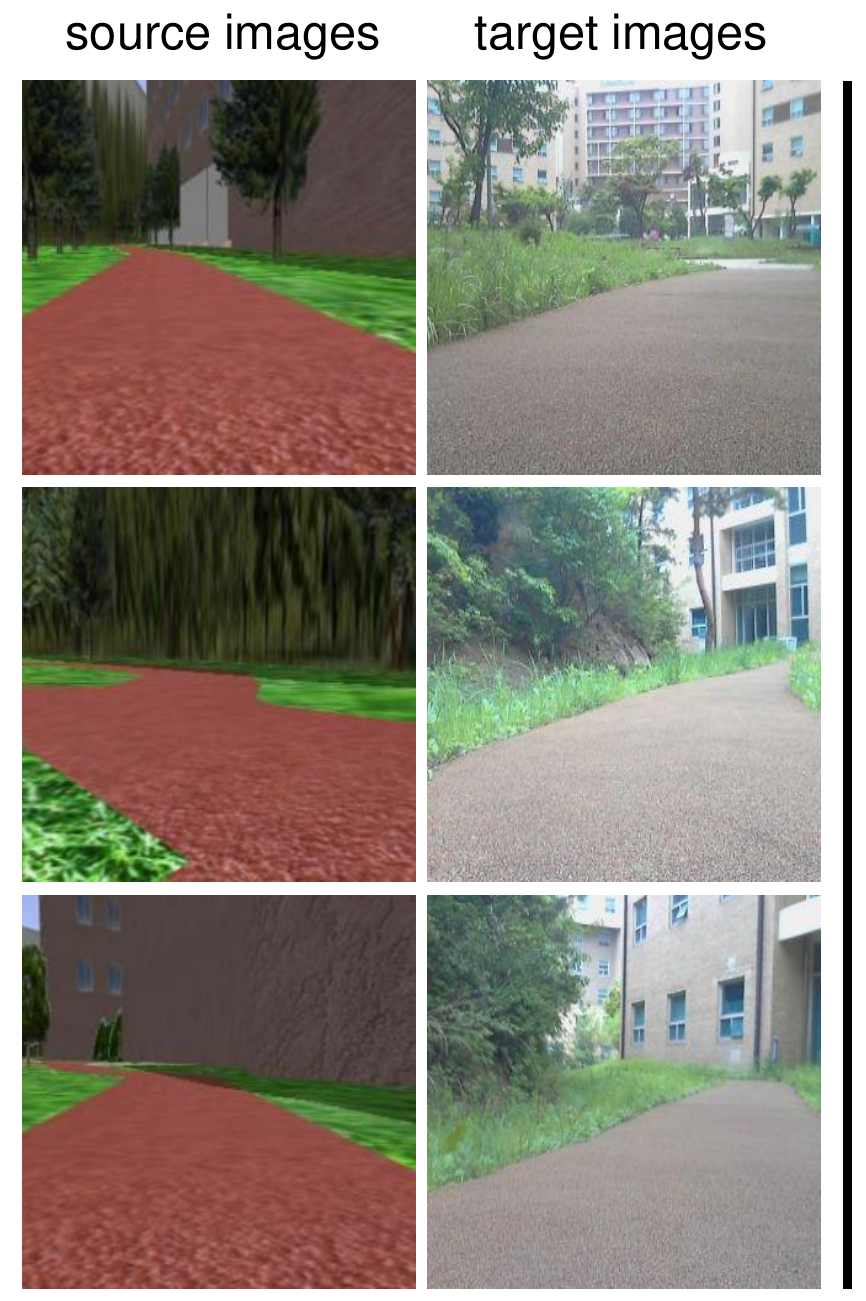}
   \caption{Source and target images}
   \label{env}  
   \end{minipage}
   \hspace{0.2cm}
   \begin{minipage}[t]{.72\textwidth}
   \centering
   \includegraphics[width=\linewidth, height=3.5cm]{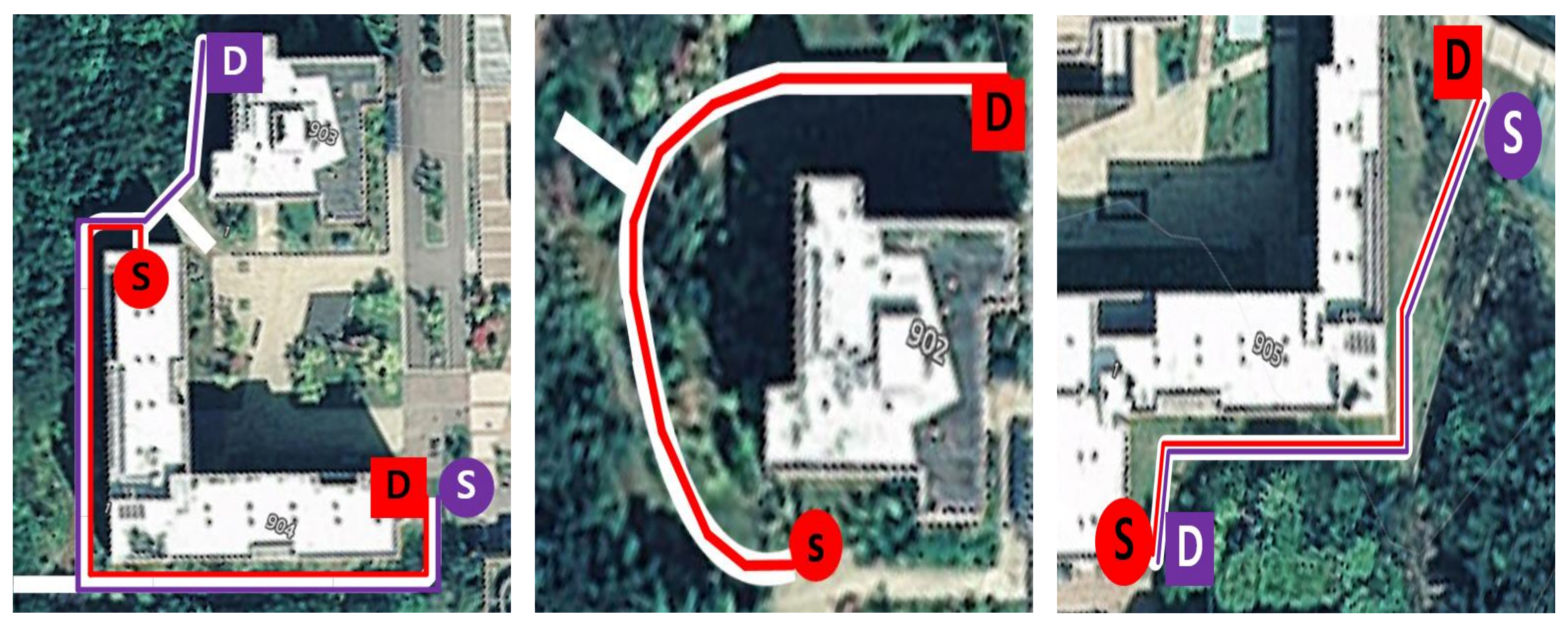}
   \caption{Course photograph. (left) Course 1-red line and course 2-purple line (center) Course 3 (right) Course 4-red and purple line. S is a start point and D is a destination.}
   \label{course}
   \end{minipage}
 \end{figure} 



\subsection{Comparison on Transferred Images} \label{compare}
The core of our method is how to reduce the domain discrepancy effectively through adversarial learning. In this section, we argue that using DRN for the generator, cycle loss, and style loss together is essential for our objective by comparing the quality of the transferred images. 

\begin{figure}[t]
\vskip 0.1in
\begin{center}
\centerline{\includegraphics[width=\columnwidth,height=9cm]{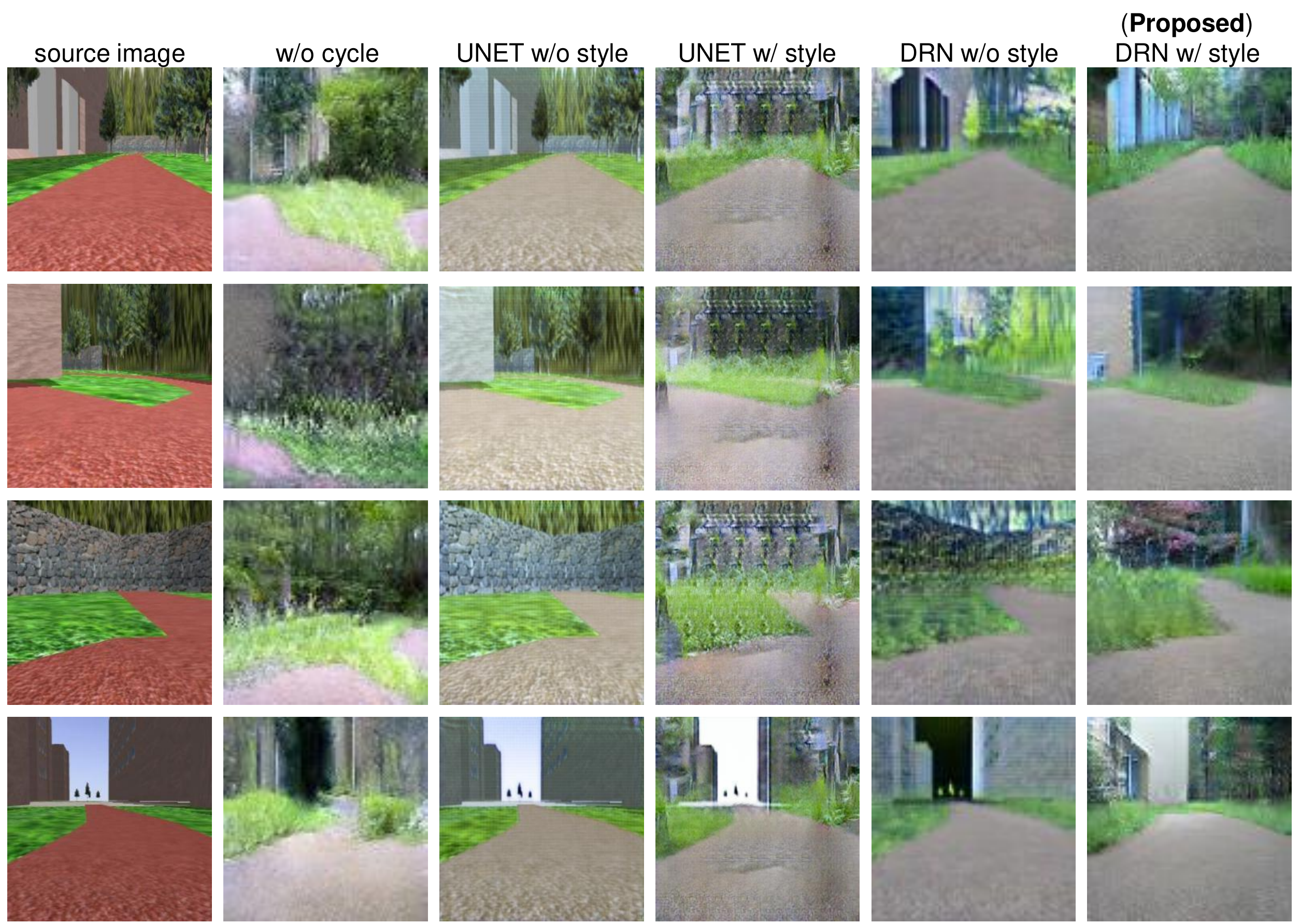}}
\caption{The source images and the transferred images by five models.}
\label{transferred_images}
\end{center}
\vskip -0.1in
\end{figure}

We compare five models categorized by: 1) with/without a cycle; (The model without a cycle transfers the source images to the target images but not the opposite.) 2) DRN/UNET in the generator; and 
3) with/without style loss. 

Fig.\ref{transferred_images} shows the source images (first column) and the corresponding transferred images obtained by five models. 1) As shown in the figure, the model without a cycle generates realistic images (second column) but fails to preserve the content of the source images. In particular, this model changed the direction and shape of the road, which is the most important part for navigation. 2) Among the models with a cycle, the models with UNET in the generator (third, fourth columns) seem to change only the color and preserve the content of the source image overly strictly compared with the models with DRN in the generator (fifth, sixth columns). The input image has a short path to the transferred image and thus to the reconstructed image in the UNET architecture. We argue that this short path may allow the model retain low level information and thus not change much from one domain to the other in order to reduce the cycle loss. In addition, the UNET with style loss model (fourth column) seems to portray textures but repeating artifacts exist. 3) Finally, the style loss appears to be critical for allowing the transferred image to have realistic textures. For example, the model with style loss successfully converted the grass plane in the source image into realistic grass with volume. In addition, the building in the simulator appears to have glass windows. More images are shown in the supplementary material S3.

We only shows one of the four possible combination for the model without a cycle, the model with DRN in the generator, style loss and without a cycle in Fig.\ref{transferred_images}. We empirically checked that the model without a cycle always fails to preserve the content. In summary, the quality of the transferred images supports our argument that using cycle loss, style loss, and DRN for the generator together is critical for the performance.

\subsection{Outdoor Navigation Test Results}
\ctable[
caption = \textbf{Average number of human interrupts to finish per a trial},
width = 1.01\textwidth,
star,
pos=t,
topcap,
label=navigation_result
]{c|cccccccc}{
}{
\toprule
\multirow{2}{*}{\small Model} &
\multicolumn{2}{c}{\small course1} &
\multicolumn{2}{c}{\small course2} &
\multicolumn{2}{c}{\small course3} \\ 
\cline{2-9}
 & \small{intervention} & \small{failed place} & \small{intervention} &\small{failed place} & \small{intervention} & \small{failed place} \\ 
\midrule
\small Source SL &\small $>$5 &\small X &\small $>$5 &\small X &\small $>$5 &\small X \\ \hline
\small UNET w/o style &\small 1.6 &\small 0.6 &\small 1.3 &\small 0.6 &\small 1.0 &\small 0.0  \\ \hline
\small UNET w/ style &\small 3.0 &\small 3.0 &\small 1.5 &\small 0.5 &\small 1.0 &\small 1.0  \\ \hline
\small DRN w/o style &\small 3.6 &\small 2.0 &\small 4.0 &\small 1.3 &\small 1.0 &\small 1.0  \\ \hline
\small\textbf{DRN w/ style} &\small 0.0 &\small 0.0 &\small 0.6 &\small 0.0 &\small 0.0 &\small 0.0  \\ \hline
}
\begin{table} [t]
\caption{\textbf{Success rate to pass the complex intersections}}\label{intersection_success_rate}
\centering
		\begin{tabular}{ccccc}\hline
			 &\small UNET w/o style &\small UNET w/ style & \small DRN w/o style & \small \textbf{DRN w/ style} \\ \hline
			\small Success rate &\small 3/5 &\small 0/5 & 2/5 & 5/5 \\ \hline
		\end{tabular}
\end{table}

In the outdoor test, we measure how well our navigational system finished various courses using a protocol similar to that in ~\cite{Rccar1}. First, we measure the number of failed places and interventions for three courses. If a vehicle goes off the road or turns against a given command, we intervene in the vehicle driving (increase the number of intervention) and then reposition the vehicle in order to try again. If the vehicle fails again, this increases the number of the failed places (but not counted as an intervention) and we reroute the vehicle manually. We also measure the rate of success in passing a complex intersection under the given command.

We tested our model on four courses. Each course is illustrated in Fig.\ref{course}. Courses 3 and 4 are unseen during the training. We do not take measurements on course 4 because this course does not contain an intersection. Instead, on course 4, we demonstrate our model's ability to navigate an unseen course as this course still requires acceptable navigational skill.

We compared the performance of five models. We exclude the model without a cycle in Sec.\ref{compare} because it changes the content and distorts the road in particular. We included the model trained with the source images and labels (supervised learning model--- named Source SL). Tab.\ref{navigation_result} shows the average number of interventions and failed places for each model while finishing the courses. 

The Source SL model failed to finish the course within reasonable trials. Given the performance of the Source SL model, we could conclude that the application of the model trained only with simulator data in outdoor environments is likely to fail. This means that domain adaptation is essential. In addition, the model with DRN in the generator, a cycle and style loss exhibited the best performance. Only this model finished all the courses without an intervention. Along with upper bound in Sec.\ref{md} and the result in the previous section, we may argue that generating a realistic image (\ie reducing the domain discrepancy) using our method is crucial. In addition, we confirmed that our proposed model navigated on course 4 well. From the results obtained for courses 3 and 4, we may argue that our model learn generality for unseen courses. Our navigation video is available at \textcolor{red}{https://www.youtube.com/watch?v=BQXtX14e-0M}.


We tested the success rate of passing a complex intersection under the given commands. We consider failure as the case in which the aforementioned human intervention is required. 
We omit the Source SL owing to its low performance as shown in Tab.\ref{navigation_result}. As shown in Tab.\ref{intersection_success_rate}, only the model with DRN in the generator, a cycle and style loss successfully navigated in all situations. 






\section{Discussion}\label{discussion}
We succeeded in performing autonomous navigation in courses with various intersections without using labels for the real images. This approach may extend the applicability of autonomous navigation to various environments. Indeed, our proposed method can be applied to other image generation or robotics tasks. 

Our style loss plays an important role in generating images with realistic texture. 
The underlying ground for using predefined networks such as VGG \cite{VGG} in conventional style transfer methods is that the networks may learn meaningful features during the training phase for the image classification. 
The gram matrix from these features can thus represent the style of the image indirectly. We may speculate that our discriminator also learns the meaningful features in order to discern the real images from synthetic images, thus resulting in the same effect. By using the discriminator instead of typically large independent networks, we were able to alleviate memory issues during the training. We also attempted to use histogram matching \cite{style4} but did not observe any positive results.

The upper bound in Sec.\ref{md} indicates that successful adversarial learning is important. We demonstrated the closeness between the two distributions of the transferred and target images by embedding images from each to a low-dimensional space, using t-SNE~\cite{tsne}. Fig.~\ref{tsne} shows the embedded points for various images.
The distribution of the transferred images is almost overlapped with the distribution of the target images while the source images are not, which supports our result. Figures obtained using other embedding methods such as principal component analysis (PCA) and, kernel PCA \cite{kpca, taehoon1} are included in the supplementary material S4.

It can be said that the task in Sec.\ref{experiment} can be achieved by filtering the red channel, learning classifier in the simulator without any domain adaptation. However, in that case, we would limit the characteristics of the target domain in advance (it is necessary to know the color of the road in the target domain). In contrast, we argue that our model need not specify the target domain by changing the road in the simulator to a gravel road which is different from pavements in the real world, and checking the quality of the domain adaptation. Fig.\ref{sandroad} shows a similar adaptation result and that our model need not specify the characteristics of the target domain.

In Sec.\ref{md}, we considered a mild assumption (\ie Assumption~\ref{assump1}). 
Thus, we considered the case in which we could eliminate the assumption. If we can collect labeled real images, an adversarial learning module 
can take not only images but also their labels as an input. We can replace all marginal distributions with the joint distribution of image $x$ and label $y$ without modifying the conclusion. 
In this case, although we use labels for the real images, we may still benefit from our approach in terms of performance. ~\citet{Berkeley} showed the improvement of the success rate in a grasping task by using domain adaptation with real labeled data.

%

\begin{figure}[t]
\captionsetup{justification=raggedright}  
\centering
  \begin{minipage}[t]{.45\textwidth}
  \centering
  \includegraphics[width=\linewidth, height=4cm]{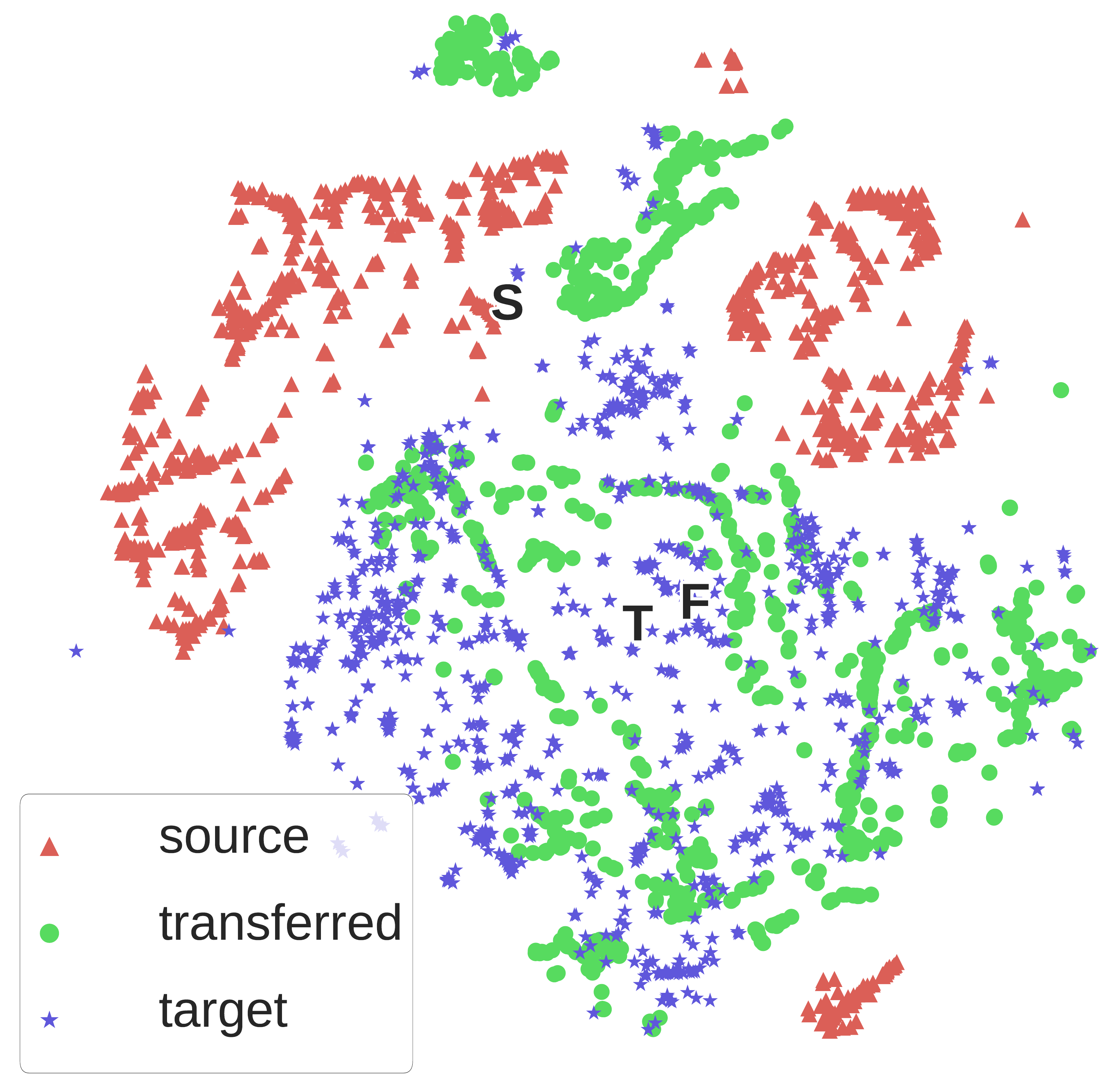}
  \caption{t-SNE result. `S',`F' and `T' indicate the median of source images, transferred images and target images.}
  \label{tsne}
  \end{minipage}
  \begin{minipage}[t]{.45\textwidth}
  \includegraphics[width=\linewidth, height=4cm]{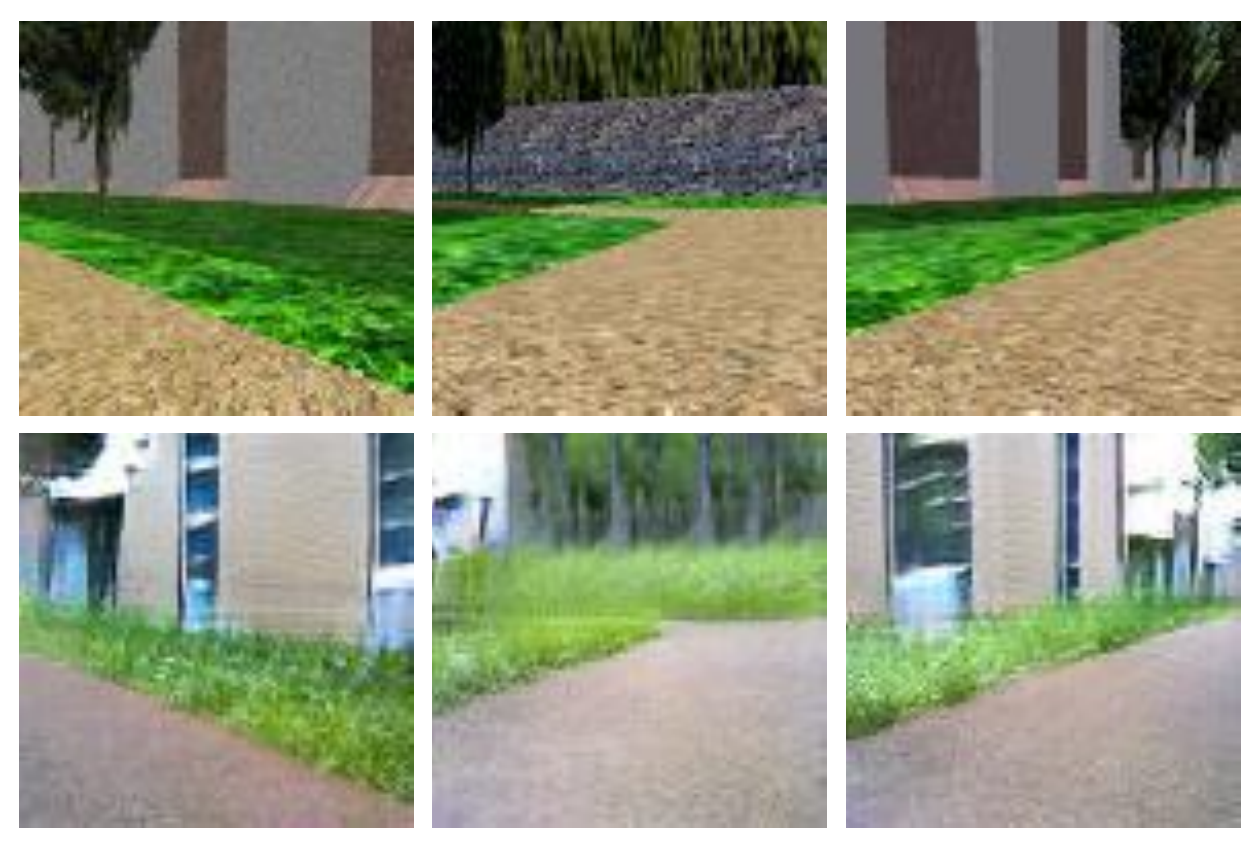}
  \caption{Transferred images (Top) from source images with yellow gravel road (Bottom).}
  \label{sandroad}
  \end{minipage}
\end{figure} 

We trained our navigational model in a reactive manner. However, it could be possible to combine reinforcement learning with our method to learn a long-term planner. The shortcoming of reinforcement learning \cite{choi2018reinforcement} --- which requires trial and error, which makes it difficult to apply in the real world ---can be addressed by using a simulator and our method. We leave this as our future work.


\bibliographystyle{plainnat}
\bibliography{main}

 \clearpage
 \pagebreak
 \begin{center}
 \textbf{\large Supplementary Material}
 \end{center}

 \beginsupplement
 \section{Implementation details}\label{implementation details}

We use the Adam optimizer \cite{adam} to minimize the objective function with a learning rate of 0.0001. We do not decay a learning rate. All the weights are randomly initialized from a truncated normal distribution with a standard deviation of 0.02, and we use a batch size of 8 for the DRN generator and 4 for the UNET generator. We use a data augmentation technique such as image contrast, brightness, saturation and injecting Gaussian noise with a probability 0.5 to diverse sample distribution. In the GAN training, the discriminator often overpowers the generator, thus hampering the generator from learning effectively. We add additive Gaussian noise and a dropout \cite{srivastava2014dropout} to the hidden layers of the discriminator to balance the power between the generator and discriminator.

We adapt our architecture from \cite{isola2017image,unsupervised}. Tab.\ref{gen} shows the architecture details of each component of our model. We denote the filter size, stride, number of the feature map and dilation rate as $\textbf{F}$, $\textbf{S}$, $\textbf{C}$, and $\textbf{D}$, respectively. For example, $\textbf{F3S1C64}$ denotes a convolution layer wherein 3$\times 3$ spatial filters are applied with a stride of 1, generating 64 feature maps. If $\textbf{D}$ is appended, the dilation convolution is applied with a corresponding dilation rate. In Tabs.\ref{drng} and \ref{classifier}, layers with two multi rows involve a residual connection. In Tab.\ref{unetg}, the arrow represents concatenation to the layers of the generator that are indicated. In Tab.\ref{disc}, $Lrelu$ denotes a leaky relu function with a slope of 0.2. It should be noted that we do not apply non-linear activations to the last layer of each component. 

We implemented the classifier as a branched architecture in which each branch shares the classifier until the $14_{th}$ layer as shown in Tab.~\ref{classifier}. We have five output nodes for each command that produce probabilities for five steering command intervals.

We apply pixel normalization \cite{karras2017progressive} for the generator, which normalizes the feature vector to the unit length so that local response of the feature map is well maintained. We empirically verify that the generator with pixel normalization produces a sharper result than other normalization techniques such as instance normalization \cite{style1,huang2017arbitrary} and batch normalization \cite{ioffe2015batch}. 

We apply instance normalization to the discriminator as in the case of other style transfer and domain adaptation approaches \cite{CYCADA,shu2018dirt,huang2017arbitrary, cyclegan,style1}. We use a small batch size owing to the memory capacity, and the performance of the batch normalization degenerates as the batch size decreases, as the small number of samples in a batch does not represent the entire dataset \cite{wu2018group}. We, thus, adopt group normalization \cite{wu2018group} for the classifier, which normalizes the input tensor with respect to a group of channels in order to guarantee the classification performance even with a small batch size. It should be noted that we do not apply normalization to the last layer of each component.

We send an angular velocity command to steer the robot at 4Hz. We divide the angular velocity in [-1.4,1.4]rad/s into five intervals and set a middle value of the range as the representative value for each class. During the navigational test in the real world, we send a weighted average angular velocity calculated with the representative values and our model's predicted class probability as the weight. 

\clearpage

\begin{table}[h]\caption{Architecture details of the generator, discriminator and the classifier.}
     \begin{subtable}{.5\linewidth}
       \centering
         \caption{DRN generator}
         \begin{tabular}{c|c}
           Layer Index & {DRN} \\ \hline
           Input & 96$\times$96$\times$3 Image \\
           \hline
 		  	L - 1 & Conv, F7S1C64, Relu \\\hline
           	\multirow{2}{*}{L - 2} & Conv, F3S1C64, Relu \\
           	 & Conv, F3S1C64, Relu \\ \hline
           	L - 3 & 2$\times$2 max-pool, S2 \\ \hline
           	\multirow{2}{*}{L - 4} & Conv, F3S1C128, Relu \\
             & Conv, F3S1C128, Relu \\ \hline
           	\multirow{2}{*}{L - 5} & Conv, F3S1C128, Relu \\
           	 & Conv, F3S1C128, Relu \\ \hline
            \multirow{2}{*}{L - 6} & Conv, F3S1C256D1, Relu \\
             & Conv, F3S1C256D2, Relu \\ \hline
            L - 7 & Conv, F3S1C128, Relu \\ \hline
            L - 8 & Conv, F3S1C128, Relu \\ \hline
            L - 9 & Deconv, F3S2S64, Relu \\ \hline
            L - 10 & Deconv, F7S1S3 
         \end{tabular} \label{drng}
     \end{subtable}  
     \begin{subtable}{.5\linewidth}
       \centering
         \caption{UNET generator}
         \begin{tabular}{c|c}
           Layer Index & {UNET} \\ \hline
           Input & 256$\times$256$\times$3 Image \\
           \hline
 		  	L - 1 & Conv, F4S2C64, Relu \tikzmark{a1} \\ \hline
           	L - 2 & Conv, F4S2C128, Relu \tikzmark{a2}\\ \hline
           	L - 3 & Conv, F4S2C256, Relu \tikzmark{a3}\\ \hline
           	L - 5 & Conv, F4S2C512, Relu \tikzmark{a4}\\ \hline
            L - 6 & Conv, F4S2C512, Relu \tikzmark{a5}\\ \hline
           	L - 7 & Conv, F4S2C512, Relu \tikzmark{a6} \\ \hline
           	L - 8 & Conv, F4S2C512, Relu \tikzmark{a7} \\ \hline
            L - 9 & Conv, F4S2C512, Relu \\ \hline
            L - 10 & Deconv, F4S2C512, Relu \tikzmark{b1}\\ \hline
            L - 11 & Deconv, F4S2C512, Relu \tikzmark{b2}\\ \hline
            L - 12 & Deconv, F4S2C512, Relu \tikzmark{b3}\\ \hline
            L - 13 & Deconv, F4S2C512, Relu \tikzmark{b4}\\ \hline
            L - 14 & Deconv, F4S2S256, Relu \tikzmark{b5}\\ \hline
            L - 15 & Deconv, F4S2C128, Relu \tikzmark{b6}\\ \hline
            L - 16 & Deconv, F4S2C128, Relu \tikzmark{b7}\\ \hline
            L - 17 & Deconv, F4S2C3
         \end{tabular} \label{unetg}
         \begin{tikzpicture}[overlay, remember picture, yshift=.25\baselineskip, shorten >=.5pt, shorten <=.5pt]
    \draw [->] ({pic cs:a7}) [bend left] to ({pic cs:b1});
    \draw [->] ({pic cs:a6}) [bend left] to ({pic cs:b2});
    \draw [->] ({pic cs:a5}) [bend left] to ({pic cs:b3});
    \draw [->] ({pic cs:a4}) [bend left] to ({pic cs:b4});
    \draw [->] ({pic cs:a3}) [bend left] to ({pic cs:b5});
    \draw [->] ({pic cs:a2}) [bend left] to ({pic cs:b6});
    \draw [->] ({pic cs:a1}) [bend left] to ({pic cs:b7});
  		 \end{tikzpicture}
     \end{subtable} 
          \begin{subtable}{.5\linewidth}
       \centering
         \caption{Discriminator}
         \begin{tabular}{c|c}
           Layer Index & {Discriminator} \\ \hline
           Input & 96$\times$96$\times$3 Image \\
           \hline
 		  	\multirow{3}{*}{L - 1} & Conv, F4S1C64, Lrelu \\
           	 & Dropout, $\rho=0.5$ \\
           	 & Gaussian noise, $\sigma=0.2$ \\ \hline
           	\multirow{3}{*}{L - 2} & Conv, F4S2C128, Lrelu \\
             & Dropout, $\rho=0.5$ \\ 
           	 & Gaussian noise, $\sigma=0.2$ \\ \hline
           	\multirow{3}{*}{L - 3} & Conv, F4S2C256, Lrelu \\ 
             & Dropout, $\rho=0.5$ \\
             & Gaussian noise, $\sigma=0.2$ \\ \hline
            \multirow{3}{*}{L - 4} & Conv, F4S2C512, Lelu \\ 
             & Dropout, $\rho=0.5$ \\ 
             & Gaussian noise, $\sigma=0.2$ \\ \hline
            L - 14 & Conv, F4S2C1024, Lrelu \\ \hline
            L - 15 & Conv, F1S1C1  
         \end{tabular}\label{disc}
     \end{subtable}  
          \begin{subtable}{.5\linewidth}
       \centering
         \caption{Classifier}
         \begin{tabular}{c|c}
           Layer Index & {Classifier} \\ \hline
           Input & 96$\times$96$\times$3 Image \\
           \hline
 		  	L - 1 & Conv, F7S2C64, Relu \\\hline
           	\multirow{2}{*}{L - 2} & Conv, F3S1C64, Relu \\
           	 & Conv, F3S1C64, Relu \\ \hline
            \multirow{2}{*}{L - 3} & Conv, F3S1C64, Relu \\
           	 & Conv, F3S1C64, Relu \\ \hline
           	L - 6 & 2$\times$2 max-pool, S2 \\ \hline
           	\multirow{2}{*}{L - 4} & Conv, F3S1C128, Relu \\
             & Conv, F3S1C128, Relu \\ \hline
           	\multirow{2}{*}{L - 5} & Conv, F3S1C128, Relu \\
           	 & Conv, F3S1C128, Relu \\ \hline
            \multirow{2}{*}{L - 6} & Conv, F3S1C256D1, Relu \\
             & Conv, F3S1C256D2, Relu \\ \hline
            \multirow{2}{*}{L - 7} & Conv, F3S1C256D2, Relu \\
             & Conv, F3S1C256D2, Relu \\ \hline
            \multirow{2}{*}{L - 8} & Conv, F3S1C512D2, Relu \\
             & Conv, F3S1C512D4, Relu \\ \hline
            \multirow{2}{*}{L - 9} & Conv, F3S1C512D4, Relu \\
             & Conv, F3S1C512D4, Relu \\ \hline
            L - 10 & Conv, F3S1C512D2, Relu \\ \hline
            L - 11 & Conv, F3S1C512D2, Relu \\ \hline
            L - 12 & Conv, F3S1C512, Relu \\ \hline
            L - 13 & Conv, F3S1C512, Relu \\ \hline
            L - 14 & Global average pooling \\ \hline
             & Branches for command\\ \hline
            L - 15 & FC, C5, Softmax 
         \end{tabular} \label{classifier}
     \end{subtable} 
 \label{gen}
\end{table}

\clearpage

\section{Proof of Theorem 1 in Sec.3.3}\label{proof of thm}
\begin{thm}
$E_\mathrm{T} \leq 4 \sqrt{X_2^2(P_\mathrm{T}(x),P_\mathrm{F}(x))} + E_\mathrm{F}$ where $X_2^2$ is the Chi-squared distance.
\end{thm}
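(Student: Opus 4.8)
The plan is to bound the gap $E_\mathrm{T}-E_\mathrm{F}$ rather than $E_\mathrm{T}$ directly, since the claim has the form $E_\mathrm{T}\le(\text{distance term})+E_\mathrm{F}$. Writing both errors through Definition~\ref{def1} and subtracting, I get $E_\mathrm{T}-E_\mathrm{F}=\int_{\mathcal{X}\times\mathcal{Y}} e(x,y)\,[P_\mathrm{T}(x,y)-P_\mathrm{F}(x,y)]\,dx\,dy$, so everything reduces to controlling the disagreement between the two joint distributions, weighted by the bounded error rate $e\in[0,1]$.

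First I would use Assumption~\ref{assump1} to collapse the joint difference onto the marginals. Factoring each joint as $P(x,y)=P(y|x)P(x)$ and replacing $P_\mathrm{F}(y|x)$ by $P_\mathrm{T}(y|x)$ (treated as exact on $supp(P_\mathrm{T})\cup supp(P_\mathrm{F})$ under the assumption), the integrand becomes $e(x,y)\,P_\mathrm{T}(y|x)\,[P_\mathrm{T}(x)-P_\mathrm{F}(x)]$. Integrating out $y$ and setting $\bar e(x)=\int_{\mathcal{Y}} e(x,y)P_\mathrm{T}(y|x)\,dy$, which still lies in $[0,1]$ because it is an average of values in $[0,1]$, I obtain $E_\mathrm{T}-E_\mathrm{F}=\int_{\mathcal{X}}\bar e(x)\,[P_\mathrm{T}(x)-P_\mathrm{F}(x)]\,dx$. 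Bounding $\bar e(x)\le 1$ pointwise and applying the triangle inequality gives $E_\mathrm{T}-E_\mathrm{F}\le\int_{\mathcal{X}}|P_\mathrm{T}(x)-P_\mathrm{F}(x)|\,dx$, i.e.\ an $L^1$/total-variation bound on the image marginals.

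The final step is to pass from this $L^1$ distance to the Chi-squared distance $X_2^2(P_\mathrm{T}(x),P_\mathrm{F}(x))$ via Cauchy--Schwarz: writing $|P_\mathrm{T}-P_\mathrm{F}|=\frac{|P_\mathrm{T}-P_\mathrm{F}|}{\sqrt{w}}\cdot\sqrt{w}$ with $w$ the reference measure appearing in the denominator of $X_2^2$ (the averaged marginal $(P_\mathrm{T}+P_\mathrm{F})/2$ for the symmetric form used in \citet{FISHERGAN}), Cauchy--Schwarz yields $\int|P_\mathrm{T}-P_\mathrm{F}|\le\bigl(\int\frac{(P_\mathrm{T}-P_\mathrm{F})^2}{w}\bigr)^{1/2}\bigl(\int w\bigr)^{1/2}$, where the second factor is a constant and the first is $\sqrt{X_2^2}$ up to normalization. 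This connects the error gap to the quantity the Fisher GAN actually minimizes, which by Theorem~1.1 of \citet{FISHERGAN} is exactly this Chi-squared distance.

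I expect the main obstacle to be twofold: making the approximate equality in Assumption~\ref{assump1} precise enough that the conditional-cancellation step is clean (restricting to the union of supports is what keeps the $X_2^2$ integral well defined and avoids division by zero), and pinning down the leading constant $4$, which is not forced by the clean Cauchy--Schwarz argument and instead depends on the exact normalization convention for $X_2^2$ in \citet{FISHERGAN} together with how crudely the weight $w$ and the bound $\bar e\le1$ are applied. The distance-to-Chi-squared inequality itself is routine; the delicate part is the bookkeeping of the constant and the support condition.
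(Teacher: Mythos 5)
Your proposal is correct and follows essentially the same route as the paper: the same decomposition $E_\mathrm{T} \le E_\mathrm{F} + \int e\,|P_\mathrm{T}-P_\mathrm{F}|$, the same use of Assumption~1 to collapse the joint difference onto the image marginals, and the same final bound $\int|P_\mathrm{T}(x)-P_\mathrm{F}(x)|\,dx \le 4\sqrt{X_2^2(P_\mathrm{T},P_\mathrm{F})}$, which the paper reaches through two propositions relating total variation to the Pearson divergence that rest on the very Cauchy--Schwarz inequality you apply directly. Your worry about the constant is unfounded: with the Fisher GAN convention $X_2^2(\mathbb{P},\mathbb{Q})=\tfrac{1}{4} X_2^P\bigl(\mathbb{P},\tfrac{\mathbb{P}+\mathbb{Q}}{2}\bigr)$ and $w=(P_\mathrm{T}+P_\mathrm{F})/2$ (so $\int w=1$), Cauchy--Schwarz gives exactly $\int|P_\mathrm{T}-P_\mathrm{F}| \le \bigl(\int (P_\mathrm{T}-P_\mathrm{F})^2/w\bigr)^{1/2} = 4\sqrt{X_2^2(P_\mathrm{T},P_\mathrm{F})}$.
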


Before we prove the theorem, we go through following two propositions.

\begin{prop}\label{prop1}
For $P_T(x)$ and $\frac{P_T(x)+P_F(x)}{2}$, the following inequality holds:
\begin{equation} \label{eq:tve}
\begin{aligned}
\scriptstyle{TV(P_{T}(x)||\frac{P_{T}(x)+P_{F}(x)}{2})^2 \leq \frac{1}{4}X_2^P(P_{T}(x),\frac{P_{T}(x)+P_{F}(x)}{2})} 
\end{aligned}
\end{equation}
where $TV$ is the total variation distance on the probability space defined by $TV(\mathbb{P}||\mathbb{Q}) = \frac{1}{2}\int |\mathbb{P}-\mathbb{Q}| d\mu$ for given measure $\mu$ and two probability distributions, $\mathbb{P}$ and $\mathbb{Q}$. $X_2^P$ is the Pearson divergence between $\mathbb{P}$ and $\mathbb{Q}$.
\end{prop}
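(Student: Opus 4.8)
The plan is to recognize this as the standard bound relating total variation distance to the Pearson (chi-squared) divergence, namely $TV(\mathbb{P}\|\mathbb{Q})^2 \le \tfrac14 X_2^P(\mathbb{P},\mathbb{Q})$ for arbitrary densities $\mathbb{P},\mathbb{Q}$, and then to specialize to $\mathbb{P} = P_{T}(x)$ and $\mathbb{Q} = \frac{P_{T}(x)+P_{F}(x)}{2}$. The only tool I would use is the Cauchy--Schwarz inequality, applied after an artificial split of the integrand by $\sqrt{\mathbb{Q}}$, together with the fact that $\mathbb{Q}$ is itself a probability density so that $\int \mathbb{Q}\,d\mu = 1$.

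Concretely, I would first write the total variation distance as $TV(\mathbb{P}\|\mathbb{Q}) = \tfrac12 \int |\mathbb{P}-\mathbb{Q}|\,d\mu = \tfrac12 \int \frac{|\mathbb{P}-\mathbb{Q}|}{\sqrt{\mathbb{Q}}}\,\sqrt{\mathbb{Q}}\,d\mu$, viewing $\frac{|\mathbb{P}-\mathbb{Q}|}{\sqrt{\mathbb{Q}}}$ and $\sqrt{\mathbb{Q}}$ as the two factors in a Cauchy--Schwarz application. This gives
\[
\left( \int \frac{|\mathbb{P}-\mathbb{Q}|}{\sqrt{\mathbb{Q}}}\,\sqrt{\mathbb{Q}}\,d\mu \right)^{2} \le \left( \int \frac{(\mathbb{P}-\mathbb{Q})^{2}}{\mathbb{Q}}\,d\mu \right)\left( \int \mathbb{Q}\,d\mu \right).
\]
The first factor on the right is exactly the Pearson divergence $X_2^P(\mathbb{P},\mathbb{Q})$, and the second factor equals $1$. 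Squaring the total-variation identity and substituting then yields $TV(\mathbb{P}\|\mathbb{Q})^{2} \le \tfrac14 X_2^P(\mathbb{P},\mathbb{Q})$, which becomes the stated inequality once $\mathbb{Q} = \frac{P_{T}(x)+P_{F}(x)}{2}$ is inserted.

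The only point requiring a little care -- and the closest thing to an obstacle -- is the measure-theoretic bookkeeping around the support of $\mathbb{Q}$: the split by $\sqrt{\mathbb{Q}}$ is valid only where $\mathbb{Q}>0$, so I would restrict every integral to $\{\mathbb{Q}>0\}$. For the specific mixture $\mathbb{Q} = \frac{P_{T}+P_{F}}{2}$ this is harmless, since $P_{T} \le 2\mathbb{Q}$ pointwise forces $P_{T}=0$ wherever $\mathbb{Q}=0$, so the region $\{\mathbb{Q}=0\}$ contributes nothing to either side and the Pearson divergence is finite. Beyond this I expect no genuine difficulty: the argument is a one-line use of Cauchy--Schwarz, and the choice $\mathbb{Q}=\frac{P_{T}+P_{F}}{2}$ in the proposition is presumably made precisely so that the next step toward Theorem~\ref{thm1} can relate this mixed divergence back to the symmetric Chi-squared distance $X_2^2(P_{T},P_{F})$.
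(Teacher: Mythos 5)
Your proof is correct. The only difference from the paper is that the paper does not prove the inequality $TV(\mathbb{P}\|\mathbb{Q})^2 \leq \frac{1}{4}X_2^P(\mathbb{P},\mathbb{Q})$ at all: it simply cites the reference \cite{choosing} for that bound (valid when $\mathbb{P}$ is dominated by $\mathbb{Q}$) and then checks, exactly as you do, that the mixture $\frac{P_T+P_F}{2}$ dominates $P_T$ so the substitution is legitimate. You instead supply a self-contained derivation of the cited bound via the split $|\mathbb{P}-\mathbb{Q}| = \frac{|\mathbb{P}-\mathbb{Q}|}{\sqrt{\mathbb{Q}}}\cdot\sqrt{\mathbb{Q}}$ and Cauchy--Schwarz, using $\int \mathbb{Q}\,d\mu = 1$; this is indeed the standard one-line proof of that inequality, and your handling of the set $\{\mathbb{Q}=0\}$ (where $P_T \le 2\mathbb{Q}$ forces $P_T=0$, so nothing is lost by restricting the integrals) is precisely the domination condition the paper invokes. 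So the two arguments rest on the same key lemma; yours has the advantage of being self-contained, while the paper's is shorter by outsourcing the analytic content to the citation.
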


\begin{proof}
\citet{choosing} showed that $TV(\mathbb{P}||\mathbb{Q})^2 \leq \frac{1}{4}X_2^P(\mathbb{P},\mathbb{Q})$ holds if $\mathbb{P}$ is dominated by $\mathbb{Q}$ where $\mathbb{P}$ and $\mathbb{Q}$ are probability distributions.

Because the support of $P_\mathrm{T}(x)$ is contained in the support of $\frac{P_\mathrm{T}(x)+P_\mathrm{F}(x)}{2}$, we can derive Equation~\ref{eq:tve} by substituting $P_{T}(x)$ for $\mathbb{P}$ and $\frac{P_\mathrm{T}(x)+P_\mathrm{F}(x)}{2}$ for $\mathbb{Q}$.
\end{proof}

\begin{prop}\label{prop2}
\begin{equation}\label{eq:tvep}
TV(P_T(x)||P_F(x))<2 \sqrt{X_2^2(P_{T}(x),P_{F}(x))}.
\end{equation}
\end{prop}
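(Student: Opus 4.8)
The plan is to reduce Proposition~\ref{prop2} to Proposition~\ref{prop1} by inserting the mixture distribution $M = \frac{P_\mathrm{T}(x)+P_\mathrm{F}(x)}{2}$ as an intermediary, since Proposition~\ref{prop1} is already stated for $P_\mathrm{T}$ against exactly this mixture. The point of routing through $M$ rather than comparing $P_\mathrm{T}$ to $P_\mathrm{F}$ directly is that the Pearson divergence $X_2^P(\mathbb{P},\mathbb{Q})$ requires $\mathbb{P}$ to be dominated by $\mathbb{Q}$; while $P_\mathrm{T}$ need not be absolutely continuous with respect to $P_\mathrm{F}$, it is always dominated by $M$, so Proposition~\ref{prop1} applies legitimately. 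The whole argument is then a matter of rewriting both sides of Proposition~\ref{prop1} in terms of the quantity $P_\mathrm{T}-P_\mathrm{F}$ that appears in the target bound.

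First I would handle the total-variation side. Using $P_\mathrm{T}(x)-M = \frac{P_\mathrm{T}(x)-P_\mathrm{F}(x)}{2}$ and the definition $TV(\mathbb{P}||\mathbb{Q}) = \frac{1}{2}\int|\mathbb{P}-\mathbb{Q}|\,d\mu$, one gets
\[
TV(P_\mathrm{T}(x)||M) = \tfrac{1}{2}\int \left| P_\mathrm{T}(x)-M \right| d\mu = \tfrac{1}{2}\int \frac{|P_\mathrm{T}(x)-P_\mathrm{F}(x)|}{2}\, d\mu = \tfrac{1}{2}\,TV(P_\mathrm{T}(x)||P_\mathrm{F}(x)),
\]
so the TV distance to the mixture is exactly half of the TV distance we wish to bound. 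Next I would perform the analogous substitution inside $X_2^P(P_\mathrm{T},M) = \int \frac{(P_\mathrm{T}-M)^2}{M}\,d\mu$: inserting $P_\mathrm{T}-M=\frac{P_\mathrm{T}-P_\mathrm{F}}{2}$ and $M=\frac{P_\mathrm{T}+P_\mathrm{F}}{2}$ collapses this to a constant multiple of $\int \frac{(P_\mathrm{T}-P_\mathrm{F})^2}{P_\mathrm{T}+P_\mathrm{F}}\,d\mu$, which is precisely the Chi-squared distance $X_2^2(P_\mathrm{T}(x),P_\mathrm{F}(x))$ (up to the normalization convention of \cite{FISHERGAN}).

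Finally I would feed both identities into Proposition~\ref{prop1}. Replacing $TV(P_\mathrm{T}||M)$ by $\tfrac12 TV(P_\mathrm{T}||P_\mathrm{F})$ on the left and $X_2^P(P_\mathrm{T},M)$ by its constant multiple of $X_2^2(P_\mathrm{T},P_\mathrm{F})$ on the right, then clearing the factor of $\tfrac14$ coming from the squared $\tfrac12$, yields $TV(P_\mathrm{T}||P_\mathrm{F})^2 \le c\,X_2^2(P_\mathrm{T},P_\mathrm{F})$ with a small explicit $c$ (in fact $c=\tfrac14$ under the \cite{FISHERGAN} convention). Taking square roots and noting $\sqrt{c}\le 2$ gives the stated inequality, with considerable room to spare, which is why it can be written strictly with the loose constant $2$. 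The only genuine obstacle is bookkeeping the normalization conventions consistently: I must pin down the exact definitions of $X_2^2$ and $X_2^P$ and track all factors of $2$ so that the mixture substitution lands cleanly on $X_2^2$; once the two identities above are in hand, the conclusion is immediate.
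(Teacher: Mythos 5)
Your proposal follows essentially the same route as the paper's own proof: write $TV(P_\mathrm{T}\Vert P_\mathrm{F})=2\,TV\!\left(P_\mathrm{T}\Vert \frac{P_\mathrm{T}+P_\mathrm{F}}{2}\right)$, apply Proposition~\ref{prop1} to the pair $(P_\mathrm{T},M)$ with $M=\frac{P_\mathrm{T}+P_\mathrm{F}}{2}$ (correctly justified by the domination of $P_\mathrm{T}$ by the mixture), and convert $X_2^P(P_\mathrm{T},M)$ into $X_2^2(P_\mathrm{T},P_\mathrm{F})$ via the identity $X_2^2(\mathbb{P},\mathbb{Q})=\frac14 X_2^P(\mathbb{P},\frac{\mathbb{P}+\mathbb{Q}}{2})$ from \cite{FISHERGAN}. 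One correction on the bookkeeping you flagged as the remaining obstacle: combining $TV(P_\mathrm{T}\Vert M)=\frac12 TV(P_\mathrm{T}\Vert P_\mathrm{F})$ with $TV(P_\mathrm{T}\Vert M)^2\le\frac14 X_2^P(P_\mathrm{T},M)=X_2^2(P_\mathrm{T},P_\mathrm{F})$ gives $TV(P_\mathrm{T}\Vert P_\mathrm{F})^2\le 4\,X_2^2(P_\mathrm{T},P_\mathrm{F})$, i.e.\ the constant is $c=4$ rather than $c=\frac14$, so the factor $2$ in the statement is exactly what the argument produces (and is attained when $P_\mathrm{T}$ and $P_\mathrm{F}$ have disjoint supports), not a loose bound with room to spare.
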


\begin{proof}

the relationship between $X_2^2$ and the Pearson divergence, $X_2^P$, such that $X_2^2(\mathbb{P},\mathbb{Q})=\frac{1}{4}X_2^P(\mathbb{P},\frac{\mathbb{P}+\mathbb{Q}}{2})$ holds \cite{FISHERGAN}, gives us the following relation:



\begin{align} \label{eq:finaltv}
\scriptstyle TV(P_{T}(x)||P_{F}(x)) &\scriptstyle= \frac{1}{2}\int |P_{T}(x)-P_{F}(x)| dx \\
&\scriptstyle= \int |P_{T}(x)- \frac{P_{T}(x)+P_{F}(x)}{2}|dx \\
&\scriptstyle=2TV(P_{T}||\frac{P_{T}+P_{F}}{2}) \\
&\scriptstyle \,<\, 2\sqrt{\frac{1}{4}X_2^P(P_{T},\frac{P_{T}+P_{F}}{2})} \label{ineqlast} \\ 
&\scriptstyle=2\sqrt{X_2^2(P_{T},P_{F})}. 
\end{align}
The inequality of Eq. \ref{ineqlast} comes from Prop.~\ref{prop1}.
\end{proof}

We now show the theorem using Prop.~\ref{prop2} as follows:
\begin{align}
\scriptstyle E_\mathrm{T} &\scriptstyle= \int_{\mathcal{X} \times \mathcal{Y}} e(x,y)P_\mathrm{T}(x,y)dxdy \label{eq:start} \\
&\scriptstyle= \int_{\mathcal{X} \times \mathcal{Y}} e(x,y)(P_\mathrm{T}(x,y)-P_\mathrm{F}(x,y)+P_\mathrm{F}(x,y))dxdy \\
&\scriptstyle \leq \int_{\mathcal{X} \times \mathcal{Y}}
e(x,y)|P_\mathrm{T}(x,y)-P_\mathrm{F}(x,y)|dxdy + E_\mathrm{F} \\
&\scriptstyle \leq \int_{\mathcal{X} \times \mathcal{Y}}
|P_\mathrm{T}(x)P_\mathrm{T}(y|x)-P_\mathrm{F}(x)P_
\mathrm{F}(y|x)\big|dxdy + E_\mathrm{F} \label{thm2}\\
&\scriptstyle \approx  \int_\mathcal{X}
|P_\mathrm{T}(x)-P_\mathrm{F}(x)|dx\int_\mathcal{Y} P_\mathrm{T}(y|x)dy + E_\mathrm{F} \label{thm3}\\ 
&\scriptstyle = 2TV(P_\mathrm{T}||P_\mathrm{F}) + E_\mathrm{F}\, <\, 4\sqrt{X_2^2(P_{T},P_{F})} + E_\mathrm{F} \label{eq:end}
\end{align}
where we apply $e(x,y)\leq 1$ to Eq.~\ref{thm2}, and Assumption 1 to Eq. \ref{thm3}.

\section{Additional Transferred Images}

We show additional transferred images in Fig.\ref{supple_tr} in addition to Fig.4. Our proposed model generates the most realistic images while preserving the content of the source image.

\clearpage 

\begin{figure*}[h]
\begin{center}
\centerline{\includegraphics[width=\textwidth,height=\textheight,keepaspectratio]{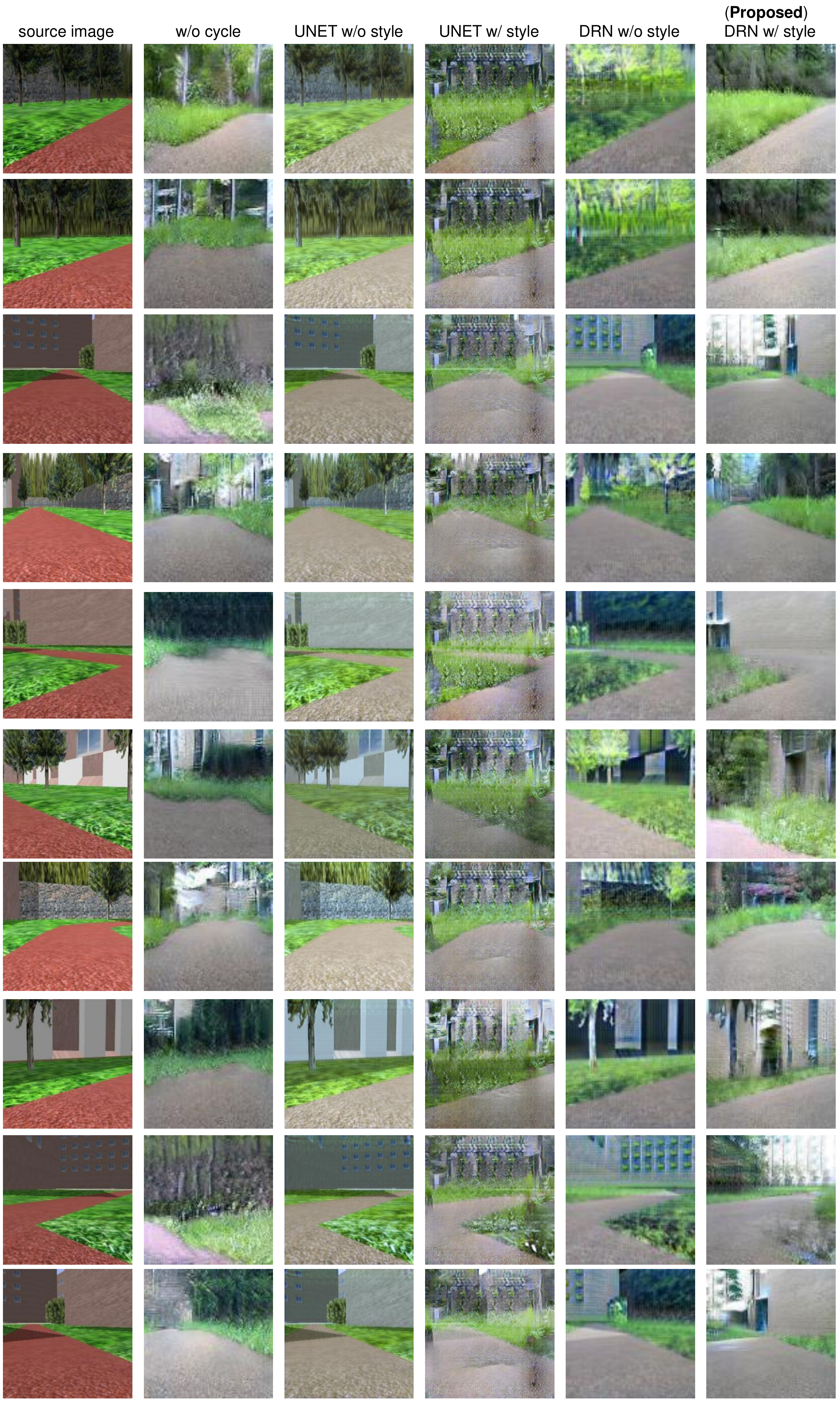}}
\caption{Source images and transferred images from five models.}
\label{supple_tr}
\end{center}
\end{figure*}

\clearpage
\section{Embedding by Other Methods}
We used other embedding methods to determine if the transferred images are actually similar to the target images, in addition to t-SNE in Sec.5. We used isomap \cite{isomap}, principal component analysis (PCA) \cite{pca} and kernel PCA \cite{kpca}. Fig.\ref{PCAcosine}, \ref{PCA}, and \ref{isomap} represent the embedding results obtained using PCA with cosine kernel, PCA, and isomap respectively. The transferred images moved toward the target images in all figures. $\mathtt{S, T}$, and $\mathtt{F}$ in each figure, indicate the median of the source, transferred, and target domain embedding images, respectively.

\begin{figure}[!htb]
\minipage{0.5\textwidth}
  \includegraphics[width=\linewidth]{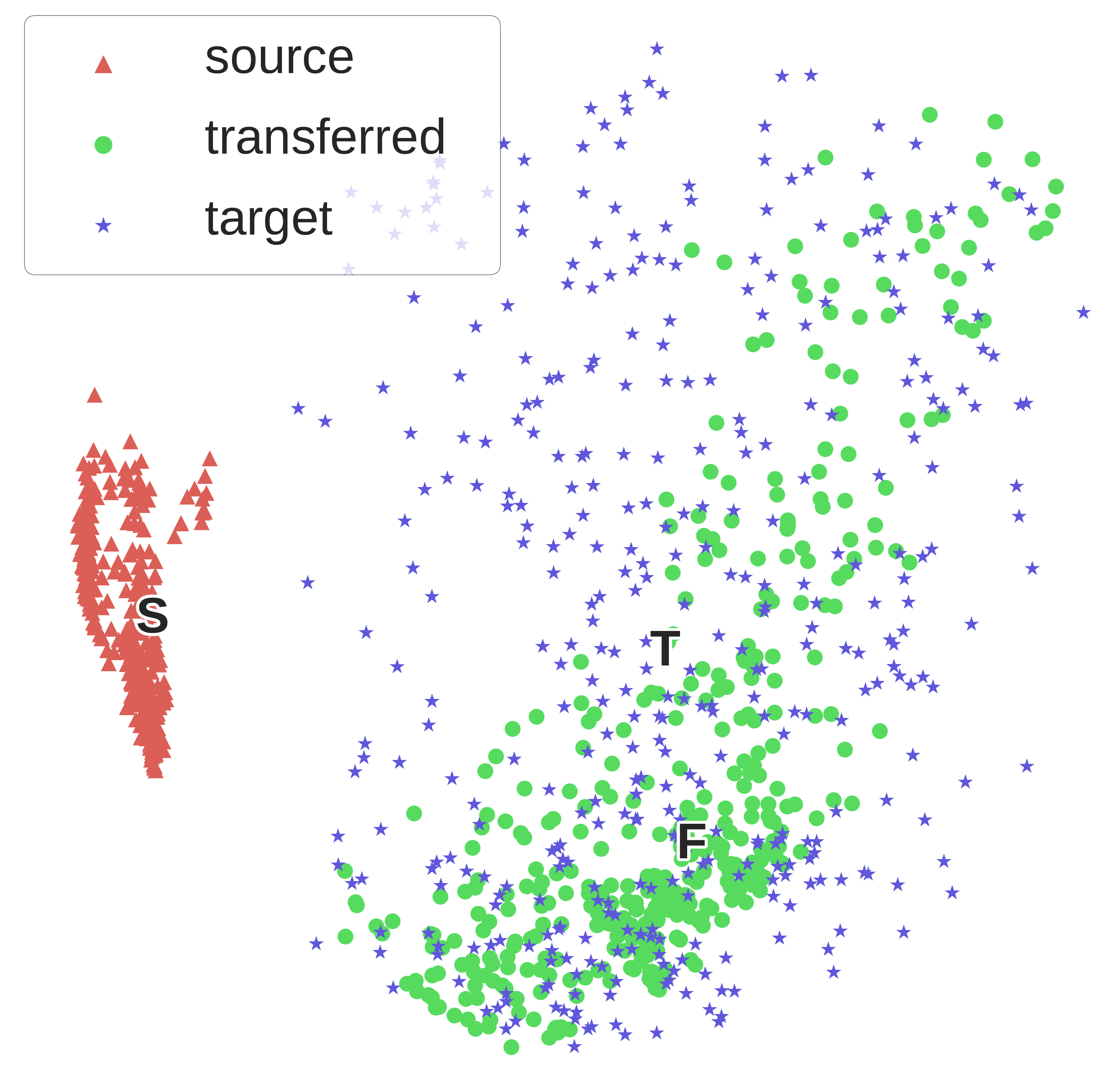}
  \caption{Cosine kernel PCA result.}\label{PCAcosine}
\endminipage\hfill
\minipage{0.5\textwidth}
  \includegraphics[width=\linewidth]{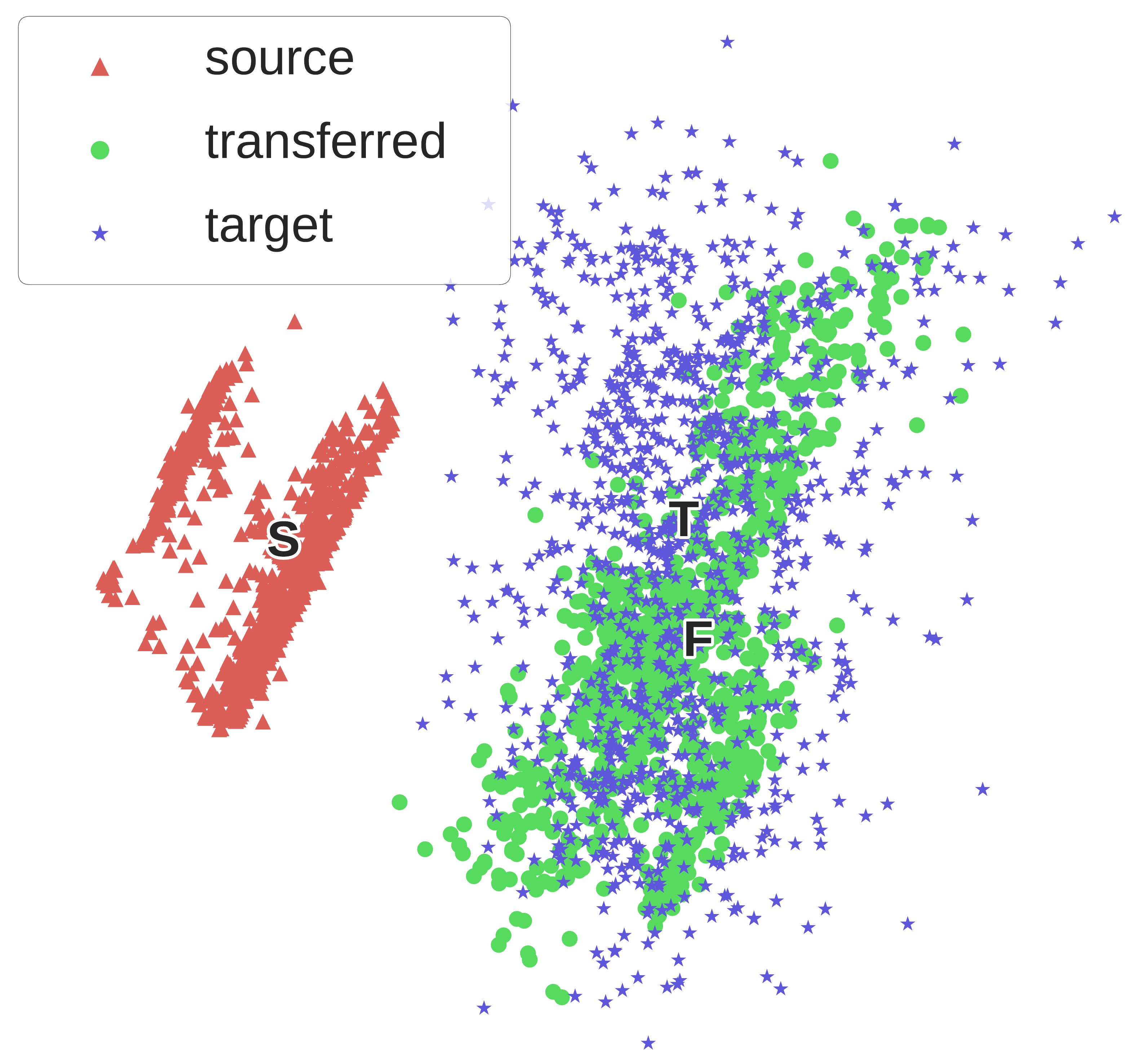}
  \caption{Standard PCA result.}\label{PCA}
\endminipage\hfill \\
  \centering
  \includegraphics[width=0.5\textwidth]{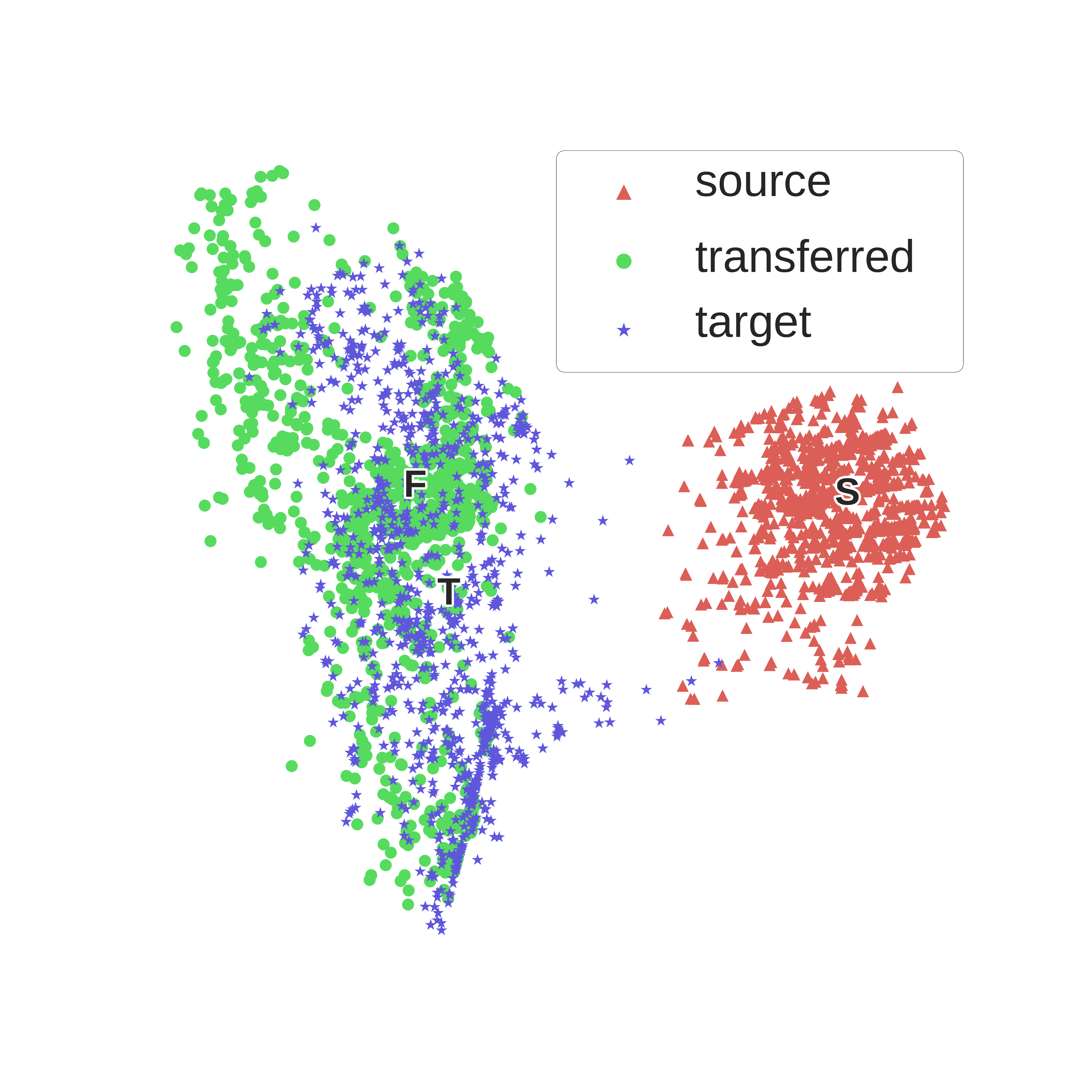}
  \caption{Isomap result.}\label{isomap}
\end{figure}

\clearpage

\section{Additional Experiment}
In addition to our navigation experiment, we performed an additional task to classify the direction in which the camera is heading with respect to the road in the local trail environment. It could be stated that this task has a strong relation with the road following navigation task because the model that learns for the task could perform UAV road following as shown in \cite{NVIDIA,guisti}.

\begin{figure}[h]
\begin{center}
\centerline{\includegraphics[width=\columnwidth]{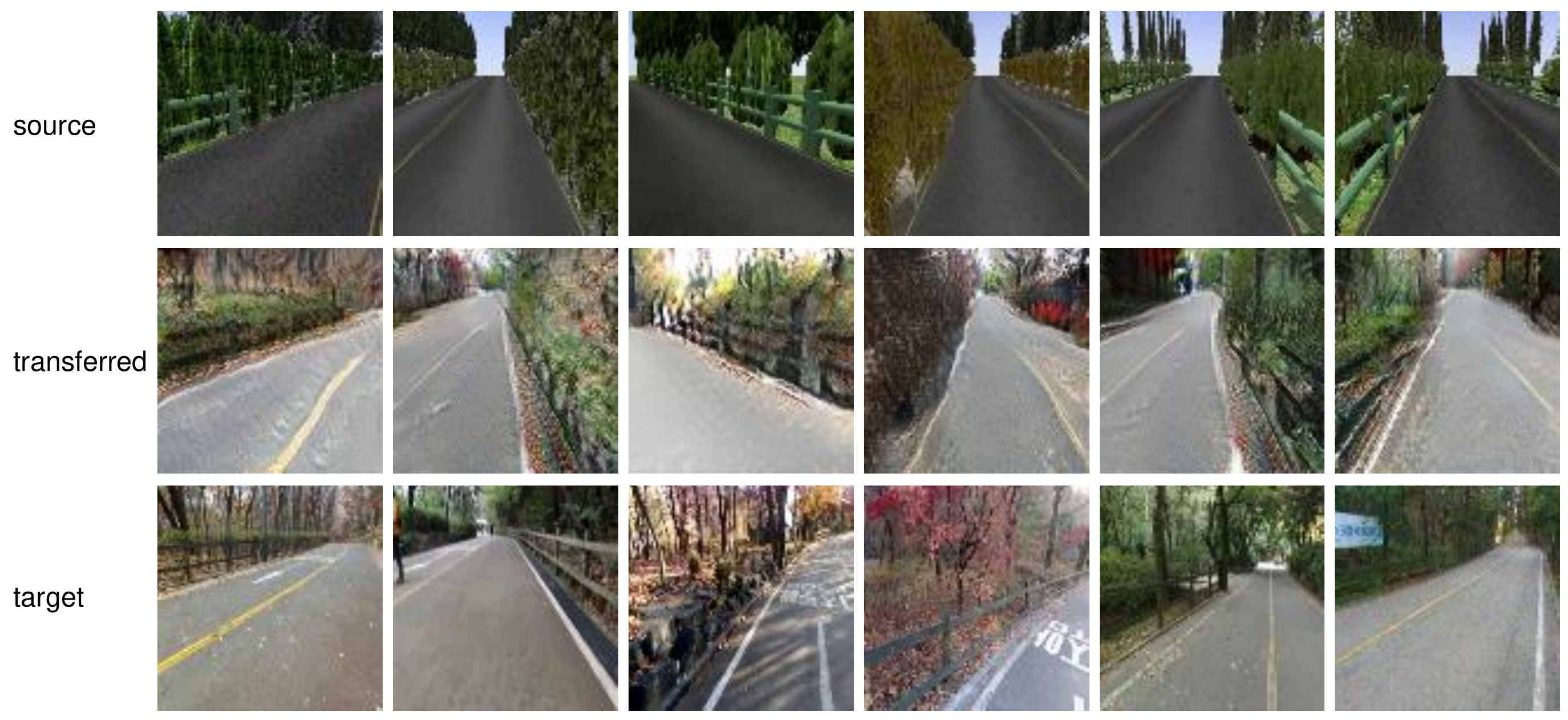}}
\caption{Source images, transferred images and target images for local trail.}
\label{supple_gwanak}
\end{center}
\end{figure}

\ctable[
caption = Classification accuracy for three models,
center,
star,
pos=h,
topcap,
label=accuracy
]{c|ccc}{
}{
\toprule
\multirow{2}{*}{Model} &
\multicolumn{3}{c}{Ground truth} \\
\cline{2-4}
& {left} & {center} & {right} \\ 
\midrule
Source SL & 87.9\%& 66.3\% & 82.7\%  \\ 
Target SL & 99.2\% & 96.0\% & 99.5\% \\ 
\textbf{Our model} & 91.3\% & 94.2\% & 86.7\%  \\ \hline
}

\ctable[
caption=Confusion matrix for three models,
star,
pos=h,
topcap,
label=cf_mtx
]{cc|ccc|ccc|ccc}{
}{
\toprule
& & \multicolumn{3}{c}{Source SL output} & \multicolumn{3}{c}{Target SL output} & \multicolumn{3}{c}{\textbf{Our model output}} \\
\cline{3-11}
& & left & straight & right & left & straight & right & left & straight & right \\
\multirow{3}{*}{\rotatebox[origin=c]{90}{\parbox[t]{0.8cm}{Ground\\ truth}}} & left & 87.9\%& 8.3\% & 3.8\% & 99.2\% & 0.45\% & 0.33\% & 91.3\% & 8.2\% & 0.5\%\\
& straight & 15.1\%& 66.3\%& 18.6\% & 0.32\% & 96.0\% & 3.7\% & 3.0\% & 94.2\% &2.8\%\\
& right & 13.0\% & 4.3\%& 82.7\% &0.03\% & 0.48\% & 99.5\% & 2.1\% & 11.2\% & 86.7\% \\ \hline
}

We set target domain images as local trail images. We then developed a simulator that contains road and low-poly trees, fences, and rocks as target images. Fig.\ref{supple_gwanak} shows the source images and transferred images obtained using our model with DRN in the generator, a cycle and style loss, and the target images. The fences and bushes became more realistic in the transferred images. This shows that our proposed model is capable of generating realistic images in different environments from the one described in the body of the paper even with a low-poly simulator.

We trained three models that classify where the camera is heading for given trail images in three classes (left/straight/right with respect to the road). The first is Source SL (Supervised Learning) model that trains only using source images and corresponding labels in a supervised manner. The second is Target SL model that uses target images and their correct labels during the training. The third is our proposed model that described in the body of the paper. 

Tab.\ref{accuracy}\ref{cf_mtx} shows the classifying accuracy and its confusion matrix. Our model improved the accuracy as compared with the Source SL model. In addition, the dangerous mistake that switches left class to right class and the opposite decreased in our model compared with Source SL model. These results support that our domain adaptation method is helpful in learning a navigational system using a simulator and makes us anticipate that our model may perform lane following task well in the real world despite the gap between the accuracy of our model and that of the Target SL.

\end{document}